\newtheorem{theorem}{Theorem}
\newtheorem{definition}{Definition}
\newtheorem{lemma}{Lemma}
\newtheorem{corollary}{Corollary}
\newtheorem{example}{Example}
\newcommand{\Algone}{{\textsc{Worcs-I}}\xspace}
\newcommand{\Algtwo}{{\textsc{Worcs-II}}\xspace}
\newcommand{\AlgtwoWeak}{{\textsc{Worcs-II-Weak}}\xspace}
\newcommand{\AlgtwoRank}{{\textsc{Worcs-II-Rank}}\xspace}
\newcommand{\AlgRand}{{\textsc{Random}}\xspace}
\newcommand{\AlgGBS}{{\textsc{Gts}}\xspace}
\newcommand{\AlgFastGBS}{{\textsc{Fast-Gts}}\xspace}
\newcommand{\AlgMinDist}{{\textsc{MinDist}}\xspace}
\newcommand{\Cstrong}{c_{\mathrm{strong}}}
\newcommand{\vv}{\mathcal{V}}
\newcommand{\m}{\mathcal{M}}
\DeclareMathOperator*{\supp}{supp}
\DeclareMathOperator*{\diam}{diam}
\DeclareMathOperator*{\argmin}{arg\,min}
\DeclareMathOperator*{\argmax}{arg\,max}
\newcommand{\defcal}[1]{\expandafter\newcommand\csname c#1\endcsname{{\mathcal{#1}}}}
\newcommand{\defbb}[1]{\expandafter\newcommand\csname b#1\endcsname{{\mathbb{#1}}}}
\newcounter{calBbCounter}
	\edef\letter{\Alph{calBbCounter}}
\title{Comparison Based Learning from Weak Oracles}
\author[1]{Ehsan Kazemi}
\author[1]{Lin Chen}
\author[2]{Sanjoy Dasgupta}
\author[1]{Amin Karbasi}
\affil[1]{Yale Institute for Network Science\\Yale University}
\affil[2]{Jacobs School of Engineering, University of California San Diego}
\affil[ ]{\normalsize \texttt{\{ehsan.kazemi, lin.chen, amin.karbasi\}@yale.edu, 
		dasgupta@eng.ucsd.edu}}
\date{}
\begin{document}

\maketitle
\begin{abstract}
 There is increasing interest in learning algorithms that involve interaction between human and machine. Comparison-based queries are among the most natural ways to get feedback from humans. A challenge in designing comparison-based interactive learning algorithms is coping with noisy answers. The most common fix is to submit a query several times, but this is not applicable in many situations due to its prohibitive cost and due to the unrealistic assumption of independent noise in different repetitions of the same query.

In this paper, we introduce a new weak oracle model, where a non-malicious user responds to a pairwise comparison query only when she is quite sure about the answer. This model is able to mimic the behavior of a human in noise-prone regions.
We also consider the application of this weak oracle model to the problem of content search (a variant of the nearest neighbor search problem) through comparisons.
More specifically, we aim at devising efficient algorithms to locate a target object in a database 
equipped with a dissimilarity metric via invocation of the weak comparison oracle. 
We propose two algorithms termed \Algone and \Algtwo (Weak-Oracle Comparison-based Search), which provably locate the target object in a number of comparisons close to the entropy of the target distribution.
While \Algone provides better theoretical guarantees, \Algtwo is applicable to more technically challenging scenarios where the algorithm has limited access to the ranking dissimilarity between objects.
A series of experiments validate the performance of our proposed algorithms.
\end{abstract}

\section{Introduction}
Interactive machine learning refers to many important applications of machine learning that involve collaboration of human and machines. The goal of an interactive learning algorithm is to learn an unknown target
hypothesis from input provided by humans\footnote{In this paper, we refer to human, crowd or user interchangeably.} in the forms of labels, pairwise comparisons, rankings or numerical evaluations \citep{huang2010active,settles2010active,yan2011active}. 
A good algorithm,  in both theory and practice, should be able to efficiently deal with inconsistent or noisy data because human feedback can be erroneous. For instance, experimental studies show error rates up to 30\% in crowdsourcing platforms such as Amazon Mechanical Turk \citep{ipeirotis2010quality}. 

Studies conducted by psychologists and sociologists suggest that humans are bad at assigning meaningful numerical values to distances between objects or at rankings \citep{stewart2005absolute}. It has been shown that pairwise comparisons are a natural way to collect user feedback \citep{thurstone1927method,salganik2015wiki}: they tend to produce responses that are most consistent across users, and they are less sensitive to noise and manipulation \citep{maystre2015fast}. 
 Despite many successful applications of pairwise comparison queries in noise-free environments, little is known on how to design and analyze noise-tolerant algorithms.
Furthermore, many active learning methods end up querying points that are the most noise-prone \citep{balcan2009agnostic}.

In the literature, the most common approach to cope with noisy answers from comparison based oracles is to make a query several times and use majority voting \citep{dalvi2013aggregating}.
These methods assume that by repeating a question they can reduce the error probability arbitrarily.
As an example,  this approach has been considered in the context of classic binary search through different hypotheses and shown to be suboptimal \citep{karp2007noisy,nowak2009noisy}.
In general, the idea of repeated queries for handling noisy oracles suffers from three main disadvantages: (i) In many important applications, such as recommender systems and exploratory searches, the algorithm is interacting with only one human and it is not possible to incorporate feedback from several users. (ii) Many queries are inherently difficult, even for human experts,  to answer. (iii) Massive redundancy significantly increases the total number of oracle accesses in the learning process. Note that each query information requires significant cost and effort.

In order to efficiently address the problem of non-perfect and noisy answers from the crowd, we introduce a new \emph{weak oracle model}, where a non-malicious user responds to a pairwise comparison query only when she is quite sure about the answer
 More specifically, we focus on the application of this weak oracle model to queries of the form of \emph{``is object $z$ more similar to object $x$ or to object $y$?"}.   In this model, a weak oracle gives an answer only if one of the two objects $x$ or $y$ is substantially more similar to $z$ than the other. We make this important assumption to cope with difficult and error-prone situations where,  to almost the same degree, the two objects are similar to the target.
 Since the oracle may decline to answer a query, we can refer to it as an  \emph{abstention oracle}.
This model is one of the very first attempts to somewhat accurately mimic the behavior of crowd in real world problems. 
To motivate the main model and algorithms of this paper consider the following example.
\begin{example}
Assume Alice is a new user to our movie recommender system platform.
  In order to recommend potentially interesting movies to Alice, we plan to figure out an unknown target movie\footnote{The target movie of Alice is unknown to the platform.} that she likes the most.
  Our aim is to find her favorite movie by asking her preference over pairs of movies she has watched already. With each response from Alice, we expect to get closer to the target movie.
 For this reason, we are interested in algorithms that find Alice's taste by asking a few questions.
 To achieve this goal, we should (i) present Alice movies that are different enough from each other to make decision easier for her, and (ii) find her interest with the minimum number of pairwise comparisons because each query takes some time and the number of potential queries might be limited. 
\end{example}

To elaborate more on the usefulness of our weak oracle model, as a special case of active learning, we aim at devising an algorithm to locate a target object in a database through comparisons.
We model objects in the database as a set of points in a metric space.
In this problem, a user (modeled by a weak oracle) has an object $t$ (e.g., her favorite movie or music) in mind which is not revealed to the algorithm.
At each time, the user is presented with two candidate objects $x$ and $y$. Then through the query $\mathcal{O}_t(x, y)$ she indicates which of these is closer to the target object.
Previous works assume a \emph{strong comparison oracle} that always gives the correct answer \citep{tschopp2011randomized, karbasi2012comparison, karbasi2012hot}.
However, as we discussed earlier, it is challenging, in practice, for the user to pinpoint the closer object when the two are similar to the target to almost the same degree. 

We therefore propose the problem of comparison-based search via a \emph{weak oracle} that gives an answer only if one of the two objects is substantially more similar to the target than the other. 
Note that due to the weak oracle assumption, the user cannot choose the closer object if they are in almost the same distance to the target.
Our goal is to find the target by making as few queries as possible to the weak comparison oracle. 
We also consider the case in which the demand over the target objects is heterogeneous, i.e., there are different demands for object in the database.

Analysis of our algorithms relies on a measure of intrinsic dimensionality for the distribution of demand. These notions of dimension imply, roughly, that the number of points in a ball in the metric space increases by only a constant factor when the radius of the ball is doubled. This is known to hold, for instance, under limited curvature and random selection of points
from a manifold (two reasonable assumptions in machine learning applications) \citep{karger2002finding}.
 Computing the intrinsic dimension is difficult in general, but we do not need to know its value; it merely appears in the query bounds for our algorithms.
\paragraph{Contributions} In this paper, we make the following contributions: 
\begin{itemize}
	\item We introduce the weak oracle model to handle human behavior, when answering comparison queries, in a more realistic way. We believe that this weak oracle model opens new avenues for designing algorithms, which handle noisy situations efficiently, for many machine learning applications. 
		\item As an important example of our weak oracle model, we propose a new adaptive algorithm, called \Algone, to find an object in a database. This algorithm relies on a priori knowledge of the distance between objects, i.e., it assumes we can compute the distance between any two objects. 
		\item We prove that, under the weak oracle model, if the search space satisfies certain mild conditions then our algorithm will find the target in a near-optimal number of queries. 
		\item In many realistic and technically challenging scenarios, it is difficult to compute all the pairwise distances and we might have access only to a noisy information about relative distances of points.
	To address this challenge, we propose \Algtwo.  
	The main assumption in \Algtwo is that for all triplets $x,y$ and $z$ we only know if the the distance of one of $x$ or $y$ is closer to $z$ by a factor of $\alpha$. 
	Note this is a much less required information than knowing all the pairwise distances.
	This kind of information could be obtained, for example, from an approximation of the distance function.\footnote{Finding the partial relative distances between points is not the focus of this paper.} 
	We prove that \Algtwo, by using only this minimal information, can locate the target in a reasonable number of queries.
	We also prove that if \Algtwo is allowed to learn about the ranking relationships between objects, it provides better theoretical guarantees.
	\item	Finally, we evaluate the performance of our proposed algorithms and several other baseline algorithms over real datasets.
\end{itemize}

The remainder of this paper is organized as follows. We discuss the related work in \cref{sec:related}. We overview definitions and preliminaries in \cref{sec:def}. We present the main algorithms of this paper in \cref{sec:main}.
In \cref{sec:experiments}, we report our experimental evaluations. We conclude the paper in \cref{sec:conclusion}. 
\section{Related Work}\label{sec:related}
Interactive learning from pairwise comparisons has a range of applications in machine learning, including exploratory search for objects in a database \citep{tschopp2011randomized, karbasi2011content, karbasi2012comparison, karbasi2012hot}, nearest neighbor search \citep{goyal2008disorder}, ranking or finding the top-$k$ ranked items \citep{chen2013pairwise,wauthier2013efficient, eriksson2013learning, maystre2015fast, heckel2016active,maystre2017just}, crowdsourcing entity resolution  \citep{wang2012crowder,wang2013leveraging,firmani2016online}, learning user preferences in a recommender system \citep{wang2014active,qian2015learning}, estimating an underlying value function \citep{balcan2016learning}, multiclass classification \citep{hastie1997classification}, embedding visualization \citep{tamuz2015adaptively} and  image retrieval \citep{wah2014similarity}.

Comparison-based search, as a form of interactive learning, was first introduced by \citet{goyal2008disorder}.
 \citet{lifshits2009combinatorial} and \citet{tschopp2011randomized} explored this problem in worst-case scenarios, where they assumed demands for the objects are homogeneous.\footnote{Their distributions are close to uniform.}
\citet{karbasi2011content} took a Bayesian approach to the problem for heterogeneous demands, i.e, where the distribution of demands are far from the uniform.
It is noteworthy that the two problems of content search and finding nearest neighbor are closely related \citep{clarkson2006nearest,indyk1998approximate}. 

Search though comparisons can be formalized as an special case of the Generalized Binary Search (GBS) or splitting algorithm \citep{dasgupta2004analysis,dasgupta2005coarse}, where the goal is to learn an underlying function from pairwise comparisons. 
In practice GBS performs very well in terms of required number of queries, but it 
 appears to be difficult to provide tight bounds for its performance. Furthermore, its computational complexity is prohibitive for large databases. For this reason, \citet{karbasi2012comparison,karbasi2012hot} introduced faster algorithms that locate the target in a database with near-optimal expected query complexities. In their work, they assume a strong oracle model and a small intrinsic dimension for points in the metric space. 

The search process should be robust against erroneous answers.  \citet{karp2007noisy} and \citet{nowak2009noisy} explored the idea of repeated queries for the GBS to cope with noisy oracles. Also, for the special case of search though comparisons, \citet{karbasi2012comparison}  propose similar ideas for handling noise.

The notion of assuming a small intrinsic dimension for points in a metric space is used in many different problems such as  finding the nearest neighbor \citep{clarkson2006nearest,har2013approximate}, routing \citep{hildrum2004distributed,tschopp2015hierarchical}, and  proximity search \citep{krauthgamer2004navigating}. 
%and comparison based search \citep{karbasi2012comparison}.

\section{Definitions and Preliminaries} \label{sec:def}
Let $ \mathcal{M} $ be a 
%finite 
metric space with distance function $ d(\cdot,\cdot) $.
Intuitively, $ \mathcal{M}$ characterizes the database of objects. 
%We define $n =|\mathcal{M}|$. 
An example of a distance function is the Euclidean distance between feature vectors of items in a database.
Given any object $ x\in \mathcal{M} $, we define the ball centered at $ x $ with radius $ r $ as
$ B_x(r) \triangleq \{ y\in \mathcal{M} : d(x,y)\leq r \}.$
We define the diameter of a set $ A\subseteq \mathcal{M} $ as
$ \diam(A) = \sup_{x,y\in A} d(x,y). $
We will consider a distribution $ \mu $ over the set $ \mathcal{M} $ which we call the \emph{demand}: it is a non-negative function $ \mathcal{M} \to [0,1] $ with $ \sum_{x\in \mathcal{M} } \mu(x) $ = 1. 
For any set $ A\subseteq \mathcal{M} $, we define
$\mu(A) = \sum_{x\in A} \mu(x)$.
The \emph{entropy} of $ \mu $ is given by
$ H(\mu) = \sum_{x\in \supp(\mathcal{M})} \mu(x) \log \tfrac{1}{\mu(x)},$
 where $ \supp(\mu) \triangleq \{ x\in \mathcal{M} : \mu(x) >0 \} $ is the support of $ \mu $. 
  Next, we define an $\epsilon$-cover and an $\epsilon$-net for a set $\mathcal{A} \subseteq \mathcal{M}$.
  
  \begin{definition} [$\epsilon$-cover]
  	An $\epsilon$-cover of a subset $\mathcal{A} \subseteq \mathcal{M}$ is a  collection of points $\{x_1, \cdots , x_k\}$ of $\mathcal{A}$ such that $   \mathcal{A} \subseteq  \bigcup_{i=1}^k B_{x_i}(\epsilon)  $.
  \end{definition}
  
  \begin{definition} [$\epsilon$-net]
  	An $\epsilon$-net of a subset $\mathcal{A} \subseteq \mathcal{M}$ is a maximal (a set to which no more objects can be added) collection of points $\{x_1, \cdots , x_k\}$ of $\mathcal{A}$ such that for any $i \neq j$ we have $d(x_i, x_j ) > \epsilon$.
  \end{definition}
An $ \epsilon $-net can be constructed efficiently via a greedy algorithm in $ O(k^2 |\mathcal{A}|) $ time, where $ k $ is the size of the $ \epsilon $-net. We refer interested readers to~\citep{clarkson2006nearest}.
   Our algorithms are analyzed under natural assumptions on the expansion rate and underlying dimension of datasets, where we define them next.

 \paragraph{Doubling Measures}
 The \emph{doubling constant} captures the embedding topology of the set of objects in a metric space  when we are generalizing analysis from Euclidean spaces to more general metric spaces.
 Such restrictions imply that the volume of a closed ball in the space does not increase drastically when its radius is increased by a certain factor.
 Formally, the doubling constant of the measure $\mu$ \citep{karger2002finding}  is given by
 \[ c \triangleq \sup_{x\in \supp(\mu) ,  R\geq 0 } \frac{\mu(B_x(2R))}{\mu(B_x(R))}. \]
  A measure $ \mu $ is \emph{$c$-doubling} if its doubling constant is $ c < \infty $. We define $\textrm{dim}_m(\mathcal{M}) = \log c$.
To provide guarantees for the performance of algorithms in \cref{sec:partial-knowledge,sec:worcs-weak}, a stronger notion of doubling measure is required. Intuitively, we need bounds on the expansion rate of all the possible subsets of support of $\mu$.
The \emph{strong doubling constant} of the measure $ \mu $~\citep{ram2013which,haghiri2017comparison} is given by
\[ 
\Cstrong \triangleq \sup_{A \subseteq \mathcal{M}} \sup_{x\in \supp(\mu) \cap A , R\geq 0 } \frac{\mu(B_x(2R) \cap A)}{\mu(B_x(R) \cap A)}. \]
 We also say that measure $ \mu $ is \emph{$\Cstrong$-doubling} if its strong doubling constant is $ \Cstrong < \infty$. 
 Note that for a finite space $\mathcal{M}$ both $c$ and $\Cstrong$ are bounded.

 \paragraph{Doubling Spaces}
The doubling dimension of a metric space $(\mathcal{M}, d)$ is the smallest  $k$ such that every subset $B_x(2R)$ of $ \mathcal{M} $ can be covered by at most $k$ balls of radius $R$. We define the doubling dimension $\mathrm{dim}_s(\mathcal{M}) = \log_2 k$.
\cref{theorem:bounded_doubling_dimension} states that a metric with a bounded doubling constant has a bounded doubling dimension, i.e., every doubling measure carries a doubling space.

\begin{lemma}[Proposition 1.2.~\citep{gupta2003bounded}] \label{theorem:bounded_doubling_dimension}
	For any finite metric $(\mathcal{M}, d)$, we have 
	\[\mathrm{dim}_s(\mathcal{M}) \leq 4 \mathrm{dim}_m(\mathcal{M}),\]
	 where $ \mathrm{dim}_m(\mathcal{M}) \triangleq \log_2 c $.
\end{lemma}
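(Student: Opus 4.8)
The plan is to prove the equivalent combinatorial statement $k \le c^{4}$, where $k$ is the covering number appearing in the definition of $\mathrm{dim}_s(\mathcal{M})$; taking $\log_2$ of both sides then gives $\mathrm{dim}_s(\mathcal{M}) = \log_2 k \le 4\log_2 c = 4\,\mathrm{dim}_m(\mathcal{M})$. So I fix an arbitrary center $x$ and radius $R>0$ and aim to cover $B_x(2R)$ by at most $c^{4}$ balls of radius $R$.

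First I would construct the cover greedily. Let $Y=\{y_1,\dots,y_m\}$ be an $R$-net of $B_x(2R)$ (this exists since $\mathcal{M}$ is finite), i.e.\ a maximal $R$-separated subset. By maximality, every point of $B_x(2R)$ is within distance $R$ of some $y_i$ — otherwise it could be added to $Y$ — so the balls $\{B_{y_i}(R)\}_{i=1}^{m}$ cover $B_x(2R)$. It therefore suffices to bound the net size $m$ by $c^{4}$; this is the standard ``packing bounds covering'' reduction.

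For the packing bound I would combine the measure $\mu$ with the doubling property. Since the $y_i$ are pairwise more than $R$ apart, the balls $B_{y_i}(R/2)$ are pairwise disjoint. On one hand, each $y_i\in B_x(2R)$, so $B_x(2R)\subseteq B_{y_i}(4R)$ by the triangle inequality, and applying the doubling inequality three times ($4R\to 2R\to R\to R/2$) yields $\mu(B_x(2R))\le \mu(B_{y_i}(4R))\le c^{3}\,\mu(B_{y_i}(R/2))$. On the other hand each $B_{y_i}(R/2)\subseteq B_x(\tfrac{5}{2}R)\subseteq B_x(4R)$, and one more application of doubling gives $\mu(B_x(4R))\le c\,\mu(B_x(2R))$. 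Using disjointness to sum the lower bounds,
\[
m\cdot c^{-3}\,\mu(B_x(2R))\;\le\;\sum_{i=1}^{m}\mu\big(B_{y_i}(R/2)\big)\;=\;\mu\Big(\bigcup_{i=1}^{m}B_{y_i}(R/2)\Big)\;\le\;\mu(B_x(4R))\;\le\;c\,\mu(B_x(2R)),
\]
so $m\le c^{4}$ after cancelling the positive factor $\mu(B_x(2R))$, which is exactly what was needed.

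The only genuine subtlety — and the step I would be most careful about — is that the doubling constant $c$ is a supremum over centers in $\supp(\mu)$, so in order to apply doubling at the $y_i$ (and at $x$) I should take the net $Y$ inside $\supp(\mu)\cap B_x(2R)$ and read $\mathrm{dim}_s$ as the covering dimension of $(\supp(\mu),d)$, which is precisely the setting of the cited result; the resulting cover then covers $\supp(\mu)\cap B_x(2R)$, $\mu(B_x(2R))>0$, and the cancellation above is legitimate. Everything else is just the triangle inequality and counting, so beyond lining up the radii and exponents to produce the constant $4$, I do not expect a real obstacle.
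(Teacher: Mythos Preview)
Your proof is correct and follows essentially the same route as the paper's: take an $R$-net of $B_x(2R)$, use disjointness of the $R/2$-balls around the net points together with three applications of doubling (via $B_x(2R)\subseteq B_{y_i}(4R)$) and one more application at $x$ (via $\bigcup_i B_{y_i}(R/2)\subseteq B_x(4R)$) to obtain the packing bound $m\le c^4$. Your added remark about restricting to $\supp(\mu)$ so that the doubling inequality is applicable at the net points is a point the paper leaves implicit.
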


\begin{proof}
	To prove this lemma, we show that any ball $B_x(2R)$ could be covered by at most $k \leq c^4$ balls of radius $R$.
	\begin{lemma} \label{theorem:eps-net-cover}
		An $ \epsilon $-net of  $\mathcal{A} $ is an $ \epsilon $-cover of $ \mathcal{A} $.
	\end{lemma}
	%\paragraph{Proof of \scref{theorem:eps-net-cover}}
	\begin{proof}
		Let $ \{x_1, x_2,\cdots, x_k\} $ be an $ \epsilon $-net of $ \mathcal{A} $.
		Suppose that there exists $  x\in \mathcal{A} $ such that $ x\notin \bigcup_{i=1}^k B_{x_i}(\epsilon) $. Then we know that $ d(x, x_i) > \epsilon $ for all $ i $. This contradicts the maximality of the $ \epsilon $-net.
	\end{proof}
	
	Assume $\{x_1, x_2,\cdots, x_k\}$ is an $R$-net of the ball $B_x(2R)$. We know for all $i \neq j$, $B_{x_i}(R/2) \cap B_{x_j}(R/2) = \varnothing$. Therefore, we have $\mu(\cup_{i = 1}^k B_{x_i}(R/2)) = \sum_{i = 1}^{k} \mu(B_{x_i}(R/2)).$ Also,
	\[ \mu(B_{x_i}(R/2)) \geq c^{-3} \mu(B_{x_i}{(4R)}) \geq c^{-3} \mu(B_x(2R)).  \]
	Also, $\cup_{i = 1}^k B_{x_i}(R/2) \subseteq B_x(4R)$. To sum-up we have
	\begin{align*}
	\mu(B_x(2R))\geq c^{-1} \mu(B_{x}(4R)) & \geq c^{-1} \mu(\cup_{i = 1}^k B_{x_i}(R/2)) \\
	& \geq c^{-1}  \sum_{i = 1}^{k} \mu(B_{x_i}(R/2)) \geq k c^{-4}  \mu(B_x(2R)).
	\end{align*}
\end{proof}

 \paragraph{Weak Comparison Oracle}
 The  \emph{strong comparison oracle} $\mathcal{O}^{\mathrm{strong}}_z(x,y)$ \citep{goyal2008disorder} takes three objects $ x,y,z $ as input
 and returns which one of $ x $ and $ y $ is closer to $ z $;
 formally, 
 \[ 
 \mathcal{O}^{\mathrm{strong}}_z(x,y)  =
 \begin{cases}
 x , & \mathrm{if}  \ d(x,z)\leq d(y,z), \\
 y, & \mathrm{otherwise}.
 \end{cases}
  \]
  
  In this paper, we propose  the \emph{weak comparison oracle}\footnote{We refer to it as an \emph{abstention oracle} interchangeably.} as a more realistic and practical model. This oracle gives an answer for sure only if one of $ d(x,z) $ and $ d(y,z) $ is substantially smaller than the other.
Formally,
  \[ \mathcal{O}_z(x , y )=
  \begin{cases}
  x, & \mathrm{if}  \ \alpha d(x,z) \leq d(y,z), \\
  y, & \mathrm{if} \ \alpha d(y,z) \leq d(x,z), \\
  ? \ \mathrm{or} \ x, &  \mathrm{if} \ d(x,z) \leq d(y,z) < \alpha d(x, z), \\
  ? \ \mathrm{or} \ y, &  \mathrm{if} \ d(y,z) < d(x,z) <  \alpha d(y, z).
  \end{cases} \]
 where $ \alpha \geq 1 $ is a constant that characterizes the approximate equidistance of $ x $ and $ y $.
 More precisely,  the answers to a query $\mathcal{O}_z(x ,y )$ has  the following interpretation:
 (i) if it is $x$, then $d(x, z) \leq d(y,z)$. (ii) if it is $y$,  $d(x, z) > d(y,z)$. (iii) if it is $?$,  $d(x, z)$ and $d(y, z)$ are within a multiplicative factor $\alpha$ of each other. 
 In our model, an answer of $x$ or $y$ does not provide any information about the relative distances of $x$ and $y$ from the target other than stating that one of them is closer to $t$. 
 Given $ x,y\in A \subseteq \mathcal{M} $, the $ \alpha $-weighted Voronoi cell of $ x $ is defined as
 $Vor(x, y, A) = \{v\in A : \alpha d(x,v) \leq d(y, v) \}, $
 where $ \alpha \geq 1 $.
 Similarly, we can define the $ \alpha $-weighted Voronoi cell of $ y $ as $ Vor(y, x, A) $.
% When $ \alpha=1 $, it is the usual Voronoi cell of $ x $.
 Note that the two Voronoi cells $Vor(x, y, A)$ and $Vor(y, x, A)$ contain the set of points which we are sure about the answers we get from the oracle. 
 \cref{fig:voronoi} shows answers we get from a weak oracle to the query $\mathcal{O}_z(x,y)$ for points $z$ in different regions of a 2-dimensional Euclidean space.
\begin{figure}[htb!]
	\centering
	\includegraphics[width =0.7\textwidth] {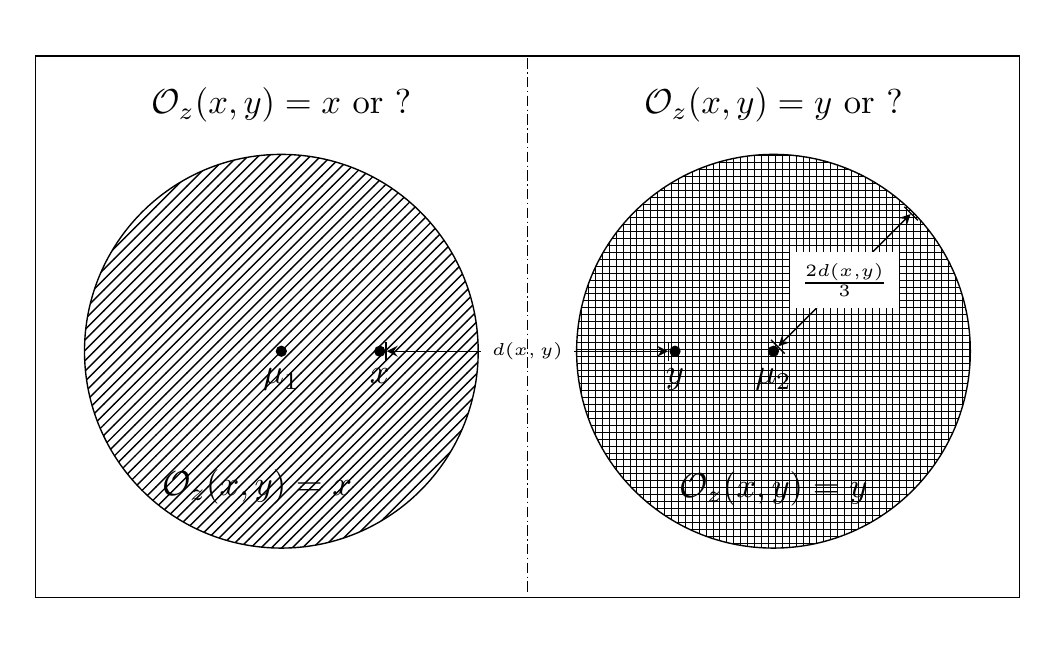}
	\caption{Voronoi cells with a multiplicative distance $\alpha = 2$ for a two dimensional Euclidean space. This figure partition the space into three parts based on the potential answers from the oracle.}\label{fig:voronoi}
\end{figure}

\section{Main Results} \label{sec:main}

Our main goal is to design an algorithm that can locate an unknown target $t \in \m$ by making only $f(c)H(\mu)$ queries in expectation, where $f(c)$ is polynomial in the doubling constant $c$.
In addition, we are interested in algorithms with a low computational complexity for deciding on the next query which we are going to make. 
Note that identifying a target having access to the very powerful \emph{membership oracle}\footnote{An oracle such that for any $A \subseteq \m$ identifies if $t \in A$ or not.} in average needs at least $H(\mu)$ queries \citep{cover2012elements}. 
%{\bf Sanjoy: is there a reference for this result? I added a reference} 
Also, \citet{karbasi2011content} proved that a strong oracle needs $\Omega(cH(\mu))$ queries to find an object via a strong oracle. To sum-up, we conclude that an algorithm with a query complexity $f(c)H(\mu)$ is near-optimal (in terms of number of queries) for a weak oracle model.

\subsection{Generalized Ternary Search}\label{sec:gts}
In this section, we first take a greedy approach to the content search problem as a baseline.
Let the version space $\mathcal{V}_i$ be the set of points (hypotheses) consistent with the answers received from the oracle so far (at time $i$). 
After each query $\mathcal{O}_t(x,y)$ there are three possible outcomes of $``x"$, $``y"$ or $``?"$. 
The Generalized Ternary Search algorithm (\AlgGBS), as a generalization of GBS  \citep{dasgupta2004analysis}, greedily selects the query $\mathcal{O}_t(x,y)$ that results in the largest reduction in mass of the version space based on the potential outcomes. 
More rigorously, the next query to be asked by \AlgGBS is found through
\begin{align}\label{eq:gts}
\argmin_{(x,y) \in \mathcal{V}_i^2} \max( \sum_{z \in \vv_i, \mathcal{O}_z(x,y) = x} \mu(z),
  \sum_{z \in \mathcal{V}_i, \mathcal{O}_z(x,y) = y} \mu(z) , \sum_{z \in \mathcal{V}_i, \mathcal{O}_z(x,y) = ?} \mu(z)).
\end{align}
The computational complexity of GTS is $\Theta(n^3)$ and thus makes it prohibitive for large databases. 
Also, while this type of greedy algorithms performs well in practice (in terms of number of queries), it is difficult (or even impossible) to provide tight bounds for their performances \citep{dasgupta2004analysis,golovin2011adaptive}. 
Furthermore, the noisy oracle model makes analysis even harder.
For this reason, we need more efficient algorithms with provable guarantees.
The key ingredient of our methods is how to choose queries in order to find the target in a near optimal number of queries. 

\subsection{Full Knowledge of the Metric Space }

The first algorithm we present (\Algone) relies on a priori knowledge of the distribution $\mu$ and the pairwise distances between all the points. 
We prove that \Algone (see \cref{alg:full_knowledge}) locates the target $t$, where its query complexity is within a constant factor from the optimum.
\Algone uses two main ideas: (i) it guarantees that, by carefully covering the version space, in each iteration the target $t$ remains in the ball (which is found in line~\ref{line:next_version_space} of \cref{alg:full_knowledge}), and (ii) the volume of version space decreases by a constant factor at each iteration. These two facts ensure that in a few iterations we can locate the target.
We should mention that while \Algone relies on knowing all the pairwise distances, but we guarantee bounds for a $c$-doubling measure $\mu$ in the metric space $\mathcal{M}$.\footnote{This is a fairly loose condition for a metric space.}
Later in this section, we devise two other algorithms that relax the assumption of having access to the pairwise distances.

\begin{theorem}\label{thm:workcs-I}
	The average number of queries required by \Algone to locate an object is bounded from above by 
	\[ f(c) \cdot \left[1 + \tfrac{H(\mu)}{\log(1/ (1-c^{-2}) )} \right],\]
	where $c$ is the doubling constant of the measure $\mu$ in the metric space $\mathcal{M}$ and $f(c)$ is a polynomial function in $c$.
\end{theorem}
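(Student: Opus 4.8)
The plan is to track the version space through the execution of \Algone and show two invariants: (1) the target $t$ always remains inside the current version space, which is maintained as a ball $B_{x_i}(r_i)$ in $\m$; and (2) the demand mass $\mu(\vv_i)$ of the version space shrinks by a constant factor, roughly $(1 - c^{-2})$, at every iteration. Given these, a standard potential/entropy argument converts the multiplicative decrease in mass into a bound on the expected number of iterations in terms of $H(\mu)$, and then each iteration is shown to cost a number of oracle queries that is polynomial in $c$, giving the stated $f(c)\cdot[1 + H(\mu)/\log(1/(1-c^{-2}))]$ bound.

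First I would describe one iteration precisely. Starting from the current ball $B_{x_i}(r_i)$ containing $t$, the algorithm builds an $\epsilon$-net (for a suitable $\epsilon$ proportional to $r_i$) of the version space; by \cref{theorem:eps-net-cover} this is also an $\epsilon$-cover, and since $\mu$ is $c$-doubling the net has size bounded by a function of $c$ (a doubling-dimension counting argument: a ball of radius $2r_i$ is covered by $c^{O(1)}$ balls of radius $\epsilon$, cf.\ the proof of \cref{theorem:bounded_doubling_dimension}). The algorithm then uses weak-oracle comparisons among the net points (and possibly the center) to localize $t$ into one of the net balls. The key geometric point is choosing $\epsilon$ and the comparison radii so that whenever the oracle would return "?", the pair of candidate points is close enough that $t$ is still pinned down to a ball of radius at most $r_{i+1} \le \beta r_i$ for some constant $\beta < 1$ depending on $\alpha$ and $c$; and whenever the oracle returns $x$ or $y$, the $\alpha$-weighted Voronoi structure guarantees $t$ lies in the corresponding half, which again lies in a smaller ball. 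This is where the "$t$ stays in the ball" invariant is enforced, and it is the part that genuinely uses the weak-oracle definition and the constant $\alpha$.

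Next I would bound the mass decrease. Because the new ball $B_{x_{i+1}}(r_{i+1})$ has radius a constant factor smaller than $r_i$, and because $\mu$ is $c$-doubling, $\mu(B_{x_{i+1}}(r_{i+1})) \le (1 - c^{-2})\,\mu(B_{x_i}(r_i))$ — the $c^{-2}$ comes from comparing a ball to a concentric ball whose radius is a bounded number of doublings larger, each doubling costing a factor $c$; I would make the exponent match the radius ratio chosen in the previous step. Iterating, after $m$ steps $\mu(\vv_m) \le (1-c^{-2})^m$. The process stops once $\mu(\vv_m) < \mu(t)$ (or the version space is a singleton), since then $t$ is the unique surviving point; this happens after $O(\log(1/\mu(t)) / \log(1/(1-c^{-2})))$ steps for a fixed target. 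Taking expectation over $t \sim \mu$ replaces $\log(1/\mu(t))$ by $H(\mu)$, giving the $H(\mu)/\log(1/(1-c^{-2}))$ term; the additive $1$ absorbs the final/rounding step. Multiplying by the per-iteration query count (size of the $\epsilon$-net times a constant, hence $f(c)$ polynomial in $c$) yields the theorem.

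The main obstacle I expect is the per-iteration localization guarantee: arranging the net resolution $\epsilon$, the comparison anchors, and the radius of the next ball so that \emph{all four} branches of the weak oracle — including the two abstention cases — simultaneously (a) keep $t$ inside the new ball and (b) force a genuine constant-factor radius shrink, while (c) keeping the net size polynomial in $c$. These three demands pull $\epsilon$ in opposite directions (small $\epsilon$ helps shrinkage but blows up the net; large $\epsilon$ controls net size but weakens the localization), and balancing them against $\alpha$ is the crux; the entropy bookkeeping afterward is routine via the $c$-doubling inequality and linearity of expectation.
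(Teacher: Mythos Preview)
Your high-level plan --- target stays in the version space, $\mu$-mass shrinks by a fixed factor per iteration, entropy bookkeeping, and a cover-size bound for the per-iteration cost --- matches the paper's proof, and the entropy step and the cover-size step are essentially as written there.

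The gap is in your mass-decrease step. You argue that since $r_{i+1}$ is a constant fraction of $r_i$, doubling yields $\mu(B_{x_{i+1}}(r_{i+1})) \le (1-c^{-2})\,\mu(B_{x_i}(r_i))$ by ``comparing a ball to a concentric ball whose radius is a bounded number of doublings larger.'' This uses doubling in the wrong direction: the inequality $\mu(B_x(2R))\le c\,\mu(B_x(R))$ \emph{lower}-bounds the smaller concentric ball's mass and gives no upper bound on it; and $B_{x_{i+1}}(r_{i+1})$, $B_{x_i}(r_i)$ are not concentric anyway. The paper's device is different: pick $c_i^{*}\in\mathcal{V}_i$ farthest from the new center $c_i$ (so $d(c_i,c_i^{*})\ge\Delta/2$ by a triangle-inequality lemma), note that $B_{c_i^{*}}(\Delta/4)$ is disjoint from the new version space $B_{c_i}\bigl(\tfrac{\Delta(\alpha+2)}{8(\alpha+1)}\bigr)$, and apply doubling \emph{at $c_i^{*}$} to get $\mu(B_{c_i^{*}}(\Delta/4))\ge c^{-2}\mu(B_{c_i^{*}}(\Delta))\ge c^{-2}\mu(\mathcal{V}_i)$. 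The $(1-c^{-2})$ factor then comes from subtracting off this disjoint mass, not from a radius comparison on the new ball.

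A minor correction on step (a): the abstention branches are not the crux for \Algone. The cover radius $\tfrac{\Delta}{8(\alpha+1)}$ and the restriction to comparisons with $d(c_i,c_j)>\tfrac{\Delta}{8}$ are chosen precisely so that for the cover center $c_t$ with $t\in B_{c_t}\bigl(\tfrac{\Delta}{8(\alpha+1)}\bigr)$ one always has $\alpha\, d(c_t,t)<d(c_j,t)$; hence the oracle is forced to return $c_t$ and never abstains on the winning center's comparisons. The real work in (a) is the converse --- any center that wins all its far comparisons must have $t$ within radius $\tfrac{\Delta(\alpha+2)}{8(\alpha+1)}$ --- which the paper handles by a short contradiction against $c_t$.
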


	\begin{proof}
	To prove this theorem, we show: (a) In each iteration the target $t$ remains in the next version space. Therefore, \Algone always find the target $t$.
	(b) The $\mu$-mass of the version space shrinks by a factor of at least $1 - c^{- 2}$ in each iteration, which results in the total number of $1 + \tfrac{H(\mu)}{\log(1/ (1-c^{-2}) )}$ iterations.
	(c) The number of queries to find the next version space is upper bounded by a polynomial function of the doubling constant $c$.
	
	(a) Assume $c_t$ is the center of a ball which contains $t$, i.e., $t \in B_{c_t}(\tfrac{\Delta}{8(\alpha + 1)})$.
	For all $c_j \neq c_t$ such that $d(c_t, c_j) > \tfrac{\Delta}{8}$, we have  $\mathcal{O}_t(c_t, c_j) = c_t$.
	This is because $d(c_t, t) \leq \tfrac{\Delta}{8(\alpha + 1)}$ and $d(c_j, t) > \tfrac{\Delta}{8} - \tfrac{\Delta}{8(\alpha + 1)}$, and thus $\alpha d(c_t, t) < d(c_j, t)$. 
	On the other hand, consider an $i$ such that for all $j \neq i$ with  $d(c_i, c_j) > \tfrac{\Delta}{8}$, we have $\mathcal{O}_t(c_i, c_j) = c_i$. We claim $t \in  B_{c_i}(\tfrac{\Delta(\alpha + 2)}{8(\alpha + 1)}).$
	Assume this in not true. Then for $c_t$ we have $d(c_i, c_t) > d(c_i,t) -  \tfrac{\Delta}{8(\alpha + 1)} \geq \frac{\Delta}{8}.$ Therefore, following the same lines of reasoning as the fist part of the proof, we  should have $\mathcal{O}_t(c_i, c_t) = c_t$. This contradicts our assumption. 
	
	(b) We have $\mu(\mathcal{V}_{i+1}) \leq (1 - c^{- 2}) \mu(\mathcal{V}_{i})$. 
		To prove this, we first state the following lemma.
		\begin{lemma}\label{lemma:maximumdist}
		Assume $\Delta = \max_{x,y \in \mathcal{V}_i} d(x,y)$. $ \forall x \in \mathcal{V}_i$ we have $\max_{y \in \mathcal{V}_i} d(x,y) \geq \tfrac{\Delta}{2}.$
	\end{lemma}
	\begin{proof}
		Assume $x^*, y^* = \argmax_{x,y \in \mathcal{V}_i} d(x,y)$.
		By using triangle inequality  we have $d(x, x^*) + d(x, y^*) \geq d(x^*, y^* ) =  \Delta$. We conclude that at least one of $d(x, x^*)$ or  $d(x, y^*) $ is larger than or equal to $\tfrac{\Delta}{2}$. This results in $ \max_{y \in \mathcal{V}_i} d(x,y) \geq \max \{ d(x, x^*) ,d(x, y^*) \} \geq \tfrac{\Delta}{2}$.
	\end{proof}
Let's assume $c_i$ is the center of $\mu(\mathcal{V}_{i+1})$ and  point $c_i^*$ is the furthest point from $c_i$. From \cref{lemma:maximumdist} we know that $d(c_i, c_i^*) \geq \frac{\Delta}{2}$.
	Also, it is straightforward to see $B_{c_i}(\frac{\Delta(\alpha + 2)}{8(\alpha + 1)}) \cap B_{c_i^*}(\frac{\Delta}{4}) = \emptyset$. From the definition of expansion rate we have $\mu(B_{c_i^*}(\frac{\Delta}{4})) \geq c^{-2} \mu(B_{c_i^*}(\Delta))  \geq c^{-2} \mu(\mathcal{V}_i)$.

	(c) Let's consider $t \in \supp \m $ as the target. \Algone  locates the target $t$, provided $\mu(\vv_i) \leq (1-c^{-2})^i \mu(\mathcal{V}_0) \leq \mu(t)$ or equivalently $ i \geq 1 + \tfrac{\log \mu(t) }{ \log(1-c^{-2} ) } \geq \lceil \tfrac{\log(\mu(t) }{ \log(1-c^{-2}) } \rceil $.
	The expected number of iterations is then upper bounded by $ \sum_{t \in \supp(\m)} \mu(t) \left( 1 + \tfrac{\log \mu(t) }{ \log(1-c^{-2} )}  \right) = 1 + \tfrac{H(\mu)}{\log(1/ (1-c^{-2}) )}.$ Finally, from \cref{theorem:bounded_doubling_dimension}, we know that we can cover the version space $\vv_i$ with at most $c^{4 \lceil \log 8(\alpha + 1) \rceil }$ balls of radius $\tfrac{\Delta}{8(\alpha + 1)}$. 
	Note that in the worst case we should query the center of each ball versus centers of all the other balls in each iteration.
\end{proof}

	In order to find a $ \frac{\Delta}{8(\alpha + 1)} $-cover in line~\ref{line:cover} of~\cref{alg:full_knowledge},  we can generate a $ (\Delta/2^l) $-net of $ \mathcal{V}_i $, where $ l =\lceil \log_2 8(\alpha+1) \rceil $. 
	The size of such a net is at most $ c^{\lceil \log_2 8(\alpha+1) \rceil + 3} $ by Lemma~1 in~\citep{karbasi2012hot}. Also, since we assume the value of $c$ is constant, $f(c)$ is independent of the size of database; it remains constant and our bound depends on the size of database only through $H(\mu)$, which is growing with a factor of at most $O(\log |M|).$

	\begin{algorithm}[htb!]
		\caption{\Algone \label{alg:full_knowledge}}
		\begin{algorithmic}[1]
			\Require Metric space $\mathcal{M}$ and oracle $\mathcal{O}_t(\cdot, \cdot)$
			\Ensure Target $t$
			\State $  \mathcal{V}_0 \gets \mathcal{M} $
			\State $ i\gets 0 $
			\Loop
			\If{$ |\vv_i| \leq 2 $}
			\State Invoke $ \mathcal{O}_t(x,y) $ and halt
			\EndIf
			\State $\Delta \gets $ diameter of $\mathcal{V}_i$ 
			\State Let $C$ be a $\frac{\Delta}{8(\alpha + 1)}$-cover of $\mathcal{V}_i$ \label{line:cover}
			\For{each $c_i \in C$}
			\If{for all $c_j$ such that $d(c_i, c_j) > \frac{\Delta}{8}$: $\mathcal{O}_t(c_i, c_j) = c_i$ }
			\State $\mathcal{V}_{i+1} \gets B_{c_i}(\frac{\Delta(\alpha + 2)}{8(\alpha + 1)})$ \label{line:next_version_space}
			\EndIf
			\EndFor
			\State $ i\gets i+1 $
			\EndLoop
		\end{algorithmic}

	\end{algorithm}

\subsection{Partial Knowledge of the Metric Space }
 \label{sec:partial-knowledge}
Despite the theoretical importance of \Algone, it suffers from two disadvantages: (i) First, it needs to know the pairwise distances between all points a priori, which makes it impractical in many applications. (ii) Second, experimental evaluations show that, due to the high degree of the polynomial function $f(c)$ (see proof of \cref{thm:workcs-I}), the query complexity of \Algone is worse than baseline algorithms for small datasets. 
To overcome these limitations, we present another algorithm called \Algtwo.

\Algtwo allows us to relax the assumption of having access to all the pairwise distances.
Indeed, the \Algtwo algorithm can work by knowing only
relative distances of points, i.e., for each triplets $x,y$ and $z$, it needs to know only if $\alpha d(x,y) \leq d(x,z) $ or $\alpha d(x,z) \leq d(x,y) $. Note that this is enough to find  $Vor(x, y, \mathcal{V}_i) $ and $Vor(y, x, \mathcal{V}_i)$ for all points $x$ and $y$.
We refer to this information as $\mathcal{F}_1$. Information of $\mathcal{F}_1$ is applied to find the resulting version space after each query.
 \cref{thm:average_number_of_calls} guarantees that \Algtwo, by using $\mathcal{F}_1$ and choosing a pair with a $\beta$ approximation of the largest distance in each iteration, will lead to a very competitive algorithm.

	\begin{algorithm}[htb!]
	\caption{\Algtwo \label{alg:main}}
%	\begin{multicols}{2}	
		\begin{algorithmic}[1]
	\Require  Oracle $\mathcal{O}_t(\cdot, \cdot)$, $\mathcal{F}_1$ and $\mathcal{F}_2$ \Comment{$\mathcal{F}_1$ is used for finding the next version space after each query. Auxiliary information $\mathcal{F}_2$ is used for finding points with larger $\beta$.}
			\Ensure Target $t$
			\State $  \mathcal{V}_0 \gets \mathcal{M} $ and $ i\gets 0 $
			\Loop
			\If{$ |\vv_i| \leq 2 $}
			\State Invoke $ \mathcal{O}_t(x,y) $ and halt
			\EndIf
			\State Find $ x,y\in \mathcal{V}_i $ such that $ d(x,y) \geq \beta \diam(\mathcal{V}_i) $ \label{line:find}
			\If{$ \mathcal{O}_t(x,y) = x $}
			\State $ \mathcal{V}_{i+1} =\mathcal{V}_i  \setminus Vor(y, x, \mathcal{V}_i) $\label{alg:line_s1}
			\ElsIf{$ \mathcal{O}_t(x,y) = y $}  \label{alg:line_s2}
			\State $ \mathcal{V}_{i+1} =\mathcal{V}_i \setminus Vor(x, y, \mathcal{V}_i) \} $  \label{alg:line_s2}
			\Else
			\State $ \mathcal{V}_{i+1} = \mathcal{V}_i \setminus (Vor(x, y, \mathcal{V}_i) \cup Vor(y, x, \mathcal{V}_i)) $  \label{alg:line_s3}
			\EndIf
			\State $ i\gets i+1 $
			\EndLoop
		\end{algorithmic}	

\end{algorithm}

\begin{theorem}\label{thm:average_number_of_calls}
	Let $ l \triangleq \left \lceil \log_2 \left( \frac{\alpha+1}{\beta} \right) \right \rceil$.
	The average number of required queries by \Algtwo to locate an object is bounded from above by 
	$1 + \frac{H(\mu)}{\log(1/ (1-\Cstrong^{-l}) )}$, 
	where $\Cstrong$ is the strong doubling constant of the measure $\mu$ in the metric space $\mathcal{M}$.
\end{theorem}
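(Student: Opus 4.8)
The plan is to follow the same three-part template as the proof of \cref{thm:workcs-I}: (a) show that the target $t$ is never removed from the version space, so that \Algtwo is correct; (b) show that the $\mu$-mass of the version space contracts by the fixed factor $1-\Cstrong^{-l}$ at every iteration; and (c) convert the per-target iteration count into the claimed average query bound by averaging over $t\sim\mu$. The key structural simplification relative to \Algone is that \Algtwo issues exactly one oracle call per iteration, so the number of queries equals the number of iterations and no extra $f(c)$-type factor appears. Throughout we may assume without loss of generality that $\mathcal{M}=\supp(\mu)$: deleting zero-mass points changes neither $\mu$, $H(\mu)$, nor correctness, and it only shrinks $\diam(\mathcal{V}_i)$, which does not affect line~\ref{line:find} since (with $\beta\le 1$) any diameter-achieving pair of the current version space still satisfies the $d(x,y)\ge\beta\diam(\mathcal{V}_i)$ requirement.

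For (a): if $\mathcal{O}_t(x,y)=x$ then the weak-oracle semantics give $d(x,t)\le d(y,t)$, so $t\notin Vor(y,x,\mathcal{V}_i)=\{v:\alpha d(y,v)\le d(x,v)\}$, hence $t\in\mathcal{V}_{i+1}$ (line~\ref{alg:line_s1}); the case $\mathcal{O}_t(x,y)=y$ is symmetric; and if the answer is $?$ then, by definition of the weak oracle, neither $\alpha d(x,t)\le d(y,t)$ nor $\alpha d(y,t)\le d(x,t)$, so $t$ lies in neither of the two removed cells (line~\ref{alg:line_s3}). Thus $t\in\mathcal{V}_i$ for all $i$.

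The heart of the argument is (b). Fix an iteration, let $\Delta=\diam(\mathcal{V}_i)$, and let $x,y\in\mathcal{V}_i$ with $d(x,y)\ge\beta\Delta$. Suppose $\mathcal{O}_t(x,y)=x$, so $\mathcal{V}_{i+1}=\mathcal{V}_i\setminus Vor(y,x,\mathcal{V}_i)$; I claim $\mu\big(Vor(y,x,\mathcal{V}_i)\big)\ge\Cstrong^{-l}\,\mu(\mathcal{V}_i)$. First, a one-line triangle-inequality computation shows $B_y\!\big(\tfrac{\beta\Delta}{\alpha+1}\big)\cap\mathcal{V}_i\subseteq Vor(y,x,\mathcal{V}_i)$: if $d(y,v)\le\tfrac{\beta\Delta}{\alpha+1}$ then $d(x,v)\ge d(x,y)-d(y,v)\ge\tfrac{\alpha\beta\Delta}{\alpha+1}\ge\alpha d(y,v)$. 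Second, since $y\in\mathcal{V}_i$ and $\diam(\mathcal{V}_i)=\Delta$ we have $\mathcal{V}_i\subseteq B_y(\Delta)$, and since $2^l\ge(\alpha+1)/\beta$ we get $\mathcal{V}_i=B_y(\Delta)\cap\mathcal{V}_i\subseteq B_y\!\big(2^l\tfrac{\beta\Delta}{\alpha+1}\big)\cap\mathcal{V}_i$. Applying the strong-doubling inequality $l$ times with $A=\mathcal{V}_i$ and center $y$ (valid because $y\in\supp(\mu)\cap\mathcal{V}_i$, so all denominators are positive) gives $\mu\big(B_y(2^l\tfrac{\beta\Delta}{\alpha+1})\cap\mathcal{V}_i\big)\le\Cstrong^{\,l}\,\mu\big(B_y(\tfrac{\beta\Delta}{\alpha+1})\cap\mathcal{V}_i\big)$; chaining the three displays yields $\mu(\mathcal{V}_i)\le\Cstrong^{\,l}\,\mu\big(Vor(y,x,\mathcal{V}_i)\big)$, i.e.\ $\mu(\mathcal{V}_{i+1})\le(1-\Cstrong^{-l})\,\mu(\mathcal{V}_i)$. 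The case $\mathcal{O}_t(x,y)=y$ is symmetric (use $B_x(\tfrac{\beta\Delta}{\alpha+1})$), and for the answer $?$ at least as much mass is removed, so the same contraction holds. This step also explains why the \emph{strong} doubling constant is needed: $\mathcal{V}_i$ is an arbitrary subset of $\mathcal{M}$, so we need the expansion bound relative to $A=\mathcal{V}_i$, which is exactly what $\Cstrong$ furnishes and $c$ does not. I expect this to be the only real obstacle, and the one subtle point in it — invoking the doubling bound around a center that might carry no local mass — is precisely what the reduction to $\mathcal{M}=\supp(\mu)$ takes care of.

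Finally (c): starting from $\mu(\mathcal{V}_0)=\mu(\mathcal{M})=1$, part (b) gives $\mu(\mathcal{V}_i)\le(1-\Cstrong^{-l})^i$; once this is $\le\mu(t)$, the fact from (a) that $t\in\mathcal{V}_i\subseteq\supp(\mu)$ forces $\mathcal{V}_i=\{t\}$, so \Algtwo has located $t$ after at most $1+\tfrac{\log(1/\mu(t))}{\log(1/(1-\Cstrong^{-l}))}$ iterations, hence that many oracle calls. Averaging over $t\sim\mu$ gives $\sum_{t\in\supp(\mu)}\mu(t)\big(1+\tfrac{\log(1/\mu(t))}{\log(1/(1-\Cstrong^{-l}))}\big)=1+\tfrac{H(\mu)}{\log(1/(1-\Cstrong^{-l}))}$, which is the claimed bound; parts (a) and (c) are essentially bookkeeping.
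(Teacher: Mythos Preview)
Your proof is correct and follows essentially the same approach as the paper's own argument: the ball $B_y\!\big(\tfrac{\beta\Delta}{\alpha+1}\big)$ (the paper uses the equivalent $B_x\!\big(\tfrac{r}{\alpha+1}\big)$ with $r=d(x,y)$) sits inside the removed Voronoi cell by a triangle-inequality computation, and $l$ applications of the strong doubling bound with $A=\mathcal{V}_i$ inflate it to cover all of $\mathcal{V}_i$, yielding the $(1-\Cstrong^{-l})$ contraction; the averaging step is identical. Your write-up is in fact slightly more careful than the paper's: you make explicit the intersection with $\mathcal{V}_i$ in the doubling step, you spell out the correctness argument (a), and your preliminary reduction to $\mathcal{M}=\supp(\mu)$ addresses the requirement $y\in\supp(\mu)\cap A$ in the definition of $\Cstrong$, a point the paper leaves implicit.
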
 

	\begin{proof}
	%	Let $ S_1 \triangleq Vor(x,y,\mathcal{V}_i) $, $ S_2 \triangleq Vor(y,x,\mathcal{V}_i) $, and $ S_3\triangleq \mathcal{V}_i \setminus (Vor(x, y, \mathcal{V}_i) \cup Vor(y, x, \mathcal{V}_i)) $.

	Let  $ S_1 \triangleq \mathcal{V}_i \setminus  Vor(y, x, \mathcal{V}_i),  S_2 \triangleq \mathcal{V}_i \setminus  Vor(x, y, \mathcal{V}_i) $ and $ S_3\triangleq \mathcal{V}_i \setminus (Vor(x, y, \mathcal{V}_i) \cup Vor(y, x, \mathcal{V}_i)) $. We denote the distance between $ x $ and $ y $ by $ r  \triangleq d(x,y) $.
	Assume $\Delta$ is the largest distance between any two pints in $\mathcal{V}_i$, i.e., $ \Delta \triangleq \diam(\mathcal{V}_i) $.
	We have $\beta =  \Delta / r$ for  $0 \leq \beta \leq 1$.
	We condition on the target element $ t\in \supp(\m) $. 
	
	We first prove that	$ \mu(\vv_i) \leq (1-\Cstrong^{-l})^i\mu(\vv_0)  =(1-\Cstrong^{-l})^i. $
	Note that we have $2^l \cdot \tfrac{r}{\alpha+1} \geq \tfrac{\alpha+1}{\beta} \cdot \tfrac{r}{\alpha+1} \geq \Delta. $
	The first step is to show that $ Vor(x, y, \mathcal{V}_i) \supseteq B_x(\frac{r}{\alpha+1}) $.
	For any element $ v  \in B_x(\frac{r}{\alpha+1}) $,
	we have $ d(x,v) \leq \frac{r}{\alpha+1} $.
	Therefore, $\alpha d(x,v) \leq \tfrac{ \alpha r}{\alpha+1} \leq r - d(x,v) = d(x,y) - d(x,v) \leq d(y,v), $
	which yields immediately that $ v\in Vor(x, y, \mathcal{V}_i)$.
	As a result,
	\begin{align*}
	  \mu(Vor(x, y, \mathcal{V}_i))  \geq \mu(B_x(\frac{r}{\alpha+1})) & \geq 
 \Cstrong^{-l} \mu( B_x(2^l \cdot \frac{r}{\alpha+1}) )   
	 	\\ &  \geq \Cstrong^{-l} \mu( B_x(D) ) \geq \Cstrong^{-l} \mu(\mathcal{V}_i).
	\end{align*}

	We deduce that 	$\mu(S_2)  = \mu(\mathcal{V}_i) - \mu(Vor(x, y, \mathcal{V}_i)) \leq (1-\Cstrong^{-l}) \mu(\mathcal{V}_i) .$
	Similarly, we have $\mu(Vor(y, x, \mathcal{V}_i)) \geq \Cstrong^{-l} \mu(\mathcal{V}_i) $
	and  $ \mu(S_1) \leq (1-\Cstrong^{-l}) \mu(\mathcal{V}_i) .$ In addition, $\mu(S_3) \leq  (1-\Cstrong^{-l}) \mu(\mathcal{V}_i)$.
	To sum up, we have $ \max_{1\leq j\leq 3}{\mu(S_j)}  \leq (1-\Cstrong^{-l}) \mu(\mathcal{V}_i).$

	The search process ends after at most $ i $ iterations provided $ \mu(\vv_i) \leq (1-\Cstrong^{-l})^i \mathcal{\vv_0} \leq \mu(t),$ or equivalently
	$ i \geq 1 + \tfrac{\log \mu(t) }{ \log(1-\Cstrong^{-l})  } .$
	The average number of iterations is then bounded from above by
\begin{align*}
 \sum_{t \in \supp(\m)} \mu(t) \left( 1 + \frac{\log \mu(t)}{ \log(1-\Cstrong^{-l})}  \right) 
\leq 1 + \frac{H(\mu)}{\log(1/ (1-\Cstrong^{-l}) )}.
\end{align*}

	Also, in each iteration we need to query only one pair of objects.
\end{proof}

From \cref{thm:average_number_of_calls},  we observe that the performance of \Algtwo improves by larger values of $\beta$.
Generally, we can assume there is auxiliary information (called $\mathcal{F}_2)$ that could be applied to find points with larger distances in each iteration.
For example, \AlgtwoRank refers to a version of \Algtwo where the ranking relationships between objects is provided as side information (it is used for $\mathcal{F}_2$). 
By knowing the rankings, we can easily find the farthest points from $x$. From the triangle inequality, we can ensure this results in a $\beta$ of at least $1/2.$ 
Also, \Algtwo by knowing all the pairwise distances (similar to \Algone), can always find a query with $\beta = 1$.
For the detailed explanation of \Algtwo see \cref{alg:main}.

Although the theoretical guarantee for \Algtwo depends on the assumption that the underlying metric space is $\Cstrong$-doubling, the dependence in $\Cstrong$ is through a polynomial function with a degree much smaller than $f(c)$.
Note that the \Algtwo algorithm finds the next query much faster than \AlgGBS. Indeed, \AlgGBS needs $O(|V_i|^3)$ operations in order to find the next query at each iteration. It is easy to see, for example, \AlgtwoRank finds the next query in only $O(|V_i|)$ steps.
In the following, we present a version of \Algtwo that needs much less information in order to find an acceptable query.

\subsection{Algorithm with Minimalistic Information} \label{sec:worcs-weak}
In this section, we present \AlgtwoWeak which needs only partial information about the relative distances. This is exactly the information from $\mathcal{F}_1$.
\AlgtwoWeak, in each iteration, instead of picking two points with distance $ \diam(\mathcal{V}_i)$ or $\diam(\mathcal{V}_i)/2$ (which is possible only with information provided from $\mathcal{F}_2$), uses \cref{alg:pickxy} to find the next query.
From the result of \cref{lemma:maximumdistestimate2}, we guarantee that \cref{alg:pickxy} finds a pair of points with distance at least $\frac{\Delta}{2\alpha}$.
The computational complexity to find such a pair is $O(|\mathcal{V}_i|^2)$.

\begin{lemma} \label{lemma:maximumdistestimate2}
	Assume for $x$ and $y$ there exist no $z \neq y$ such that $\mathcal{O}_x(y,z) = y$, then $d(x,y) \geq \frac{\Delta}{2\alpha}$.
\end{lemma}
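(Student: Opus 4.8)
The plan is to turn the hypothesis into a quantitative statement about how far any point of $\mathcal{V}_i$ can lie from $x$ relative to $d(x,y)$, and then pair that with the triangle inequality applied to a diameter-realizing pair, exactly as in the proof of \cref{lemma:maximumdist}.

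First I would unpack the weak oracle applied to the triple with center $x$ and candidates $y$ and $z$. By the definition of $\mathcal{O}$, whenever $\alpha\, d(x,y) \le d(x,z)$ the oracle is in its forced regime and must return $y$. Taking the contrapositive of the hypothesis — that there is no $z \neq y$ with $\mathcal{O}_x(y,z) = y$ — then gives that every $z \in \mathcal{V}_i \setminus \{y\}$ satisfies $d(x,z) < \alpha\, d(x,y)$. Note only the forced direction of the oracle is used here; an answer of $?$ is allowed by the hypothesis, so the abstention regime requires no argument.

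Next, let $x^*, y^* \in \mathcal{V}_i$ realize the diameter, so that $d(x^*, y^*) = \Delta \triangleq \diam(\mathcal{V}_i)$. The triangle inequality at $x$ gives $d(x,x^*) + d(x,y^*) \ge d(x^*,y^*) = \Delta$, hence at least one of the two, say $w$, satisfies $d(x,w) \ge \Delta/2$. If $w = y$, then since $\alpha \ge 1$ we get $d(x,y) \ge \Delta/2 \ge \Delta/(2\alpha)$ and we are done. Otherwise $w \in \mathcal{V}_i \setminus \{y\}$, so the bound from the previous step applies: $d(x,w) < \alpha\, d(x,y)$, which rearranges to $d(x,y) > d(x,w)/\alpha \ge \Delta/(2\alpha)$.

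The argument is short and uses no doubling-measure assumption; the only two spots that need care are (i) correctly identifying which regime of the weak oracle is the forced one, so that the contrapositive of the hypothesis is invoked properly, and (ii) the degenerate case in which $y$ coincides with one of the two diameter endpoints, which is handled separately as above.
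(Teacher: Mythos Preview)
Your proof is correct and follows essentially the same route as the paper: obtain a point $w$ with $d(x,w)\ge\Delta/2$ (the paper takes $w=x^*\triangleq\argmax_z d(x,z)$ and cites \cref{lemma:maximumdist}; you take a diameter endpoint and rerun the same triangle-inequality argument inline), then apply the contrapositive of the forced regime of $\mathcal{O}_x(y,\cdot)$ at $z=w$, handling the case $w=y$ separately. The only substantive difference is that the paper's proof also establishes that at least one $y$ with the hypothesized property exists (namely $y=x^*$), which is needed for \cref{alg:pickxy} to terminate but is not strictly part of the lemma as stated.
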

	\begin{proof}
	We first prove that we can always find at least one point with this property. 
	Define $x^* = \argmax_{z \in \mathcal{V}_i} d(x,z)$.
	From \cref{lemma:maximumdist}, we know that $ d(x, x^*) \geq \frac{\Delta}{2}$. 
	We claim there is no $z \neq x^* $ such that $x \in Vor(x^*, z, \mathcal{V}_i) $.
	 Assume there is a $z$. This means $\alpha d(x, x^*)  \leq d(x, z)$, where it contradicts with the choice of $x^*$. This means that the set of points with this property is not empty.
	If $y = z^*$ then we are done with the proof, because $d(x, x^*) \geq \frac{\Delta}{2} \geq \frac{\Delta}{2\alpha}$.
	Next, we prove that for any $y \neq x^*$ with this property, we have $d(x,y) \geq \frac{\Delta}{2\alpha}$.
	Assume $y \neq x^*$.  We know $x \notin Vor(y, x^*, \mathcal{V}_i) $. Therefore, we have $\alpha d(x,y) \geq d(x, x^*) \geq \frac{\Delta}{2}$ and $d(x,y) \geq  \frac{\Delta}{2\alpha}.$
\end{proof}

\begin{corollary} \label{cor:workII-weak}
	 From the results of \cref{thm:average_number_of_calls} and \cref{lemma:maximumdistestimate2}, we conclude that the expected query complexity of \AlgtwoWeak is bounded from above by $ 1 + \tfrac{H(\mu)}{\log(1/ (1-\Cstrong^{-l}) )}$ with $l = 1 + \lceil \log_2 \alpha(\alpha + 1) \rceil$.
\end{corollary}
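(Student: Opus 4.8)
The plan is to observe that \AlgtwoWeak coincides, line for line, with \Algtwo (\cref{alg:main}) except for how the pair $(x,y)$ is chosen in line~\ref{line:find}: \AlgtwoWeak produces it via \cref{alg:pickxy} rather than by directly exhibiting a $\beta$-approximation of the diameter. The version-space update — peeling off one or both of the $\alpha$-weighted Voronoi cells $Vor(x,y,\mathcal{V}_i)$ and $Vor(y,x,\mathcal{V}_i)$ — is unchanged and uses only the information $\mathcal{F}_1$. Consequently the entire analysis behind \cref{thm:average_number_of_calls} applies as soon as we identify the effective quality parameter $\beta$ of the pair that \cref{alg:pickxy} returns.

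First I would invoke \cref{lemma:maximumdistestimate2}, which shows that a pair $(x,y)$ with the property ``there is no $z\neq y$ with $\mathcal{O}_x(y,z)=y$'' always exists, can be located in $O(|\mathcal{V}_i|^2)$ time, and satisfies $d(x,y)\geq \Delta/(2\alpha)$ with $\Delta=\diam(\mathcal{V}_i)$. In the notation of \cref{thm:average_number_of_calls} this says precisely that \AlgtwoWeak always makes a legal choice in line~\ref{line:find} with $\beta = d(x,y)/\Delta \geq 1/(2\alpha)$ (and $\beta\leq 1$ since $\alpha\geq 1$).

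Next I would set $\beta = 1/(2\alpha)$ in \cref{thm:average_number_of_calls}. Its parameter $l=\lceil\log_2((\alpha+1)/\beta)\rceil$ then becomes $l=\lceil\log_2(2\alpha(\alpha+1))\rceil = 1+\lceil\log_2(\alpha(\alpha+1))\rceil$, matching the corollary. The one point to verify is that the geometric step inside the proof of \cref{thm:average_number_of_calls} — the inequality $2^l\cdot\tfrac{r}{\alpha+1}\geq\Delta$, which is what forces $\mu(Vor(x,y,\mathcal{V}_i))\geq \Cstrong^{-l}\mu(\mathcal{V}_i)$ — survives the weaker diameter guarantee. It does: since $r=d(x,y)\geq\Delta/(2\alpha)$ we get $2^l\cdot\tfrac{r}{\alpha+1}\geq 2^l\cdot\tfrac{\Delta}{2\alpha(\alpha+1)}\geq\Delta$ by the choice of $l$. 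Given this, the per-iteration contraction $\max_{1\leq j\leq 3}\mu(S_j)\leq(1-\Cstrong^{-l})\mu(\mathcal{V}_i)$ holds verbatim, and conditioning on the target $t$ and averaging $\mu(t)\bigl(1+\log\mu(t)/\log(1-\Cstrong^{-l})\bigr)$ over $\supp(\mu)$ exactly as in \cref{thm:average_number_of_calls} gives the stated bound $1+\tfrac{H(\mu)}{\log(1/(1-\Cstrong^{-l}))}$.

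The only real obstacle is the bookkeeping in the previous paragraph: checking that replacing the clean bound $d(x,y)\geq\beta\diam(\mathcal{V}_i)$ by the $O(|\mathcal{V}_i|^2)$-obtainable bound $d(x,y)\geq\Delta/(2\alpha)$ still lets the doubling argument close. It does precisely because $l$ has been inflated by $1$ (the $1+\lceil\log_2\alpha(\alpha+1)\rceil$ here versus $\lceil\log_2((\alpha+1)/\beta)\rceil$ there) to absorb the extra factor of $2$ in the distance guarantee; apart from this, the corollary is an immediate consequence of substituting \cref{lemma:maximumdistestimate2} into \cref{thm:average_number_of_calls}.
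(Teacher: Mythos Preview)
Your proposal is correct and follows exactly the approach the paper intends: the corollary is stated without proof in the paper, as a direct substitution of the $\beta=1/(2\alpha)$ guarantee from \cref{lemma:maximumdistestimate2} into \cref{thm:average_number_of_calls}, and your computation $\lceil\log_2(2\alpha(\alpha+1))\rceil = 1+\lceil\log_2(\alpha(\alpha+1))\rceil$ recovers the claimed value of $l$. The extra verification you do of the geometric step $2^l\cdot r/(\alpha+1)\geq\Delta$ is harmless but unnecessary, since \cref{thm:average_number_of_calls} is already stated for arbitrary $\beta\in(0,1]$ and applies as a black box once you exhibit a valid $\beta$.
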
 

 The bounds in \cref{thm:average_number_of_calls} and \cref{cor:workII-weak} depend on the value of $\Cstrong$ through the required number of iterations because the reduction in the size of version space is lower bounded by a function of $\Cstrong$. In \Algtwo at each iteration we only ask one question, therefore we do not have any dependence on a polynomial function of $\Cstrong$. 
 Also, note that our algorithms do need to know the value of $\alpha$.
 
\begin{algorithm}[H]
	\caption{Find $x$ and $y$ \label{alg:pickxy}}
	\begin{algorithmic}[1]
		\Require $\mathcal{V}_i $ and $\mathcal{F}_1$ 
		\Ensure $(x,y)$ with $d(x,y) \geq \frac{1}{2\alpha} \diam(\mathcal{V}_i)$
		\State Uniformly at random pick $x \in \mathcal{V}_i$
		\For{$y \in \mathcal{V}_i \setminus \{ x \}$}
			\If{$\forall z \in \mathcal{V}_i \setminus \{ x , y\}: x \notin Vor(y, z, \mathcal{V}_i) $}
		\State \Return{$(x,y)$}
			\EndIf
		\EndFor
	\end{algorithmic}

\end{algorithm}

In this section, we presented four different algorithms to locate an object in a database by using a weak oracle model. These algorithms use different types of information as input. To provide guarantees for their performances, we need to make different assumptions on the structure of the underlying metric space (see \cref{table:alg-compare}).  

\Algone provides better theoretical results; The guarantee for \Algone depends on constant values of $c$ which is a looser assumption than having a constant $\Cstrong.$ 
In the next section, we will show that although the theoretical guarantee of \Algtwo is based on a stronger assumption over the metric space ($\Cstrong$ vs $c$), in all of our experiments it shows better performances in comparison to \Algone. 
It finds the target with fewer queries and the computational complexity of choosing the next query is much less. We believe that for most real datasets both $c$ and $\Cstrong$ are small and close to each other. Therefore, the polynomial term $f(c)$ (defined in \cref{thm:workcs-I}) plays a very important role in practice.
Finally, providing guarantees for the performance of \AlgGBS or similar algorithms seems impossible \citep{dasgupta2004analysis,golovin2011adaptive}.

\begin{table*}[t]
	\caption{Comparison between algorithms}
	\label{table:alg-compare}
	\centering
	\begin{tabular}{lll}
	\toprule[1.5pt]
		Algorithm & Input information & Constraint          \\
		\midrule
		\AlgGBS & Weak rank information  &  No guarantees \\
		\Algone & Pairwise distances & Doubling constant    \\
		\AlgtwoRank & Rank information & Strong doubling constant    \\
		\AlgtwoWeak & Weak rank information & Strong doubling constant  \\
		 \bottomrule[1.25pt]
	\end{tabular}
\end{table*}

\section{Experiments} \label{sec:experiments}
We compare \Algone and \Algtwo (\AlgtwoWeak and \AlgtwoRank)\footnote{In this section, \textsc{Worcs-II-R} and \textsc{Worcs-II-W} stand for \AlgtwoRank and  \AlgtwoWeak.} with several baseline methods. 
Recall that \Algone needs to know the pairwise distances between objects, and \AlgtwoRank and \AlgtwoWeak need only information about ranking and partial ordering obtained  from the weak oracle, respectively.
For choosing the baselines, we followed the same approach as \citet{karbasi2012comparison}.
Our baseline methods are:
\begin{itemize}
	\item \AlgGBS which is explained in \cref{sec:gts}.
 	\item \AlgRand: The general framework of \AlgRand is the same as \cref{alg:main} except that in Line \ref{line:find}, \AlgRand randomly samples a pair of points $ x $ and $ y $ in the current version space.
 \item \AlgFastGBS: In light of the  computationally prohibitive nature of \AlgGBS, \AlgFastGBS is an approximate alternative to \AlgGBS. Rather than finding the exact minimizer, it randomly samples $ k $ pairs of points and finds the minimizer of \cref{eq:gts} only in the sampled pairs. 
	\AlgRand and \AlgGBS can be viewed as a special case of \AlgFastGBS with parameter $ k=1 $ and $ k = \binom{|\vv_i|}{2} $ (sampling without repetition), respectively.
	We take $ k=10 $ in the experiments.
	\item  \AlgMinDist: In contrast to \Algtwo, \AlgMinDist selects a pair of points that reside closest to each other. We use this baseline to highlight the importance of choosing a pair that attain the approximate maximum distance.
\end{itemize}

%\subsection{Query and Computational Complexity on Real-world Datasets}

We evaluate the performance of algorithms through two metrics: (i) query complexity: the expected number of queries to locate an object, and (ii) computational complexity: the total time-complexity of determining which
queries to submit to the oracle.
We conduct experiments over the following datasets: \emph{MovieLens}~\citep{harper2016movielens}, \emph{Wine}~\citep{forina1991extendible}, \emph{Music}~\citep{zhou2014predicting}, and Fisher's \emph{Iris} dataset.
The demand distribution $ \mu $ is set to the power law with exponent $ 0.4 $.
In order to model uncertain  answers from the oracle  $ \mathcal{O}_t(x,y) $, i.e., for points $t$ that are almost equidistant from both $x$ and $y$, we use the following model:
if $ d(x,t) \leq d(y,t) < \alpha d(x,t)  $, the oracle outputs $ x $ with probability $ \tfrac{\log(d(y,t)/d(x,t))}{\log \alpha} $ and outputs $ ``?" $ otherwise; similarly if $ d(y,t) \leq d(x,t) < \alpha d(y,t) $, the oracle outputs $ y $ with probability $\tfrac{\log(d(x,t)/d(y,t)}{\log \alpha}$ and outputs $ ``?" $ otherwise.

We present expected query complexity of all seven algorithms on each dataset in~\cref{tab:query} (we have to use a table rather than a bar chart since the data are highly skewed). 
The corresponding computational complexity is shown in~\cref{fig:computational_complexity}. It can be observed that \AlgtwoWeak outperforms all other algorithms in terms of query complexity.
 \AlgtwoRank is only second to \AlgtwoWeak. Intuitively, selecting two distant items for a query leads into two large Voronoi cells around them, which partially explains the good performance of \AlgtwoRank.
 However, it could leave the rest of the version space relatively small; therefore, it may not reduce the version space substantially when a $``?"$ response is received from the oracle. Unlike \AlgtwoRank,
   \AlgtwoWeak is only guaranteed to find a pair that attain a $ \beta $-approximation of the diameter of the current version space, where $ \beta \geq \frac{1}{2\alpha} $ as shown in~\cref{lemma:maximumdistestimate2}, thereby resulting in a relatively more balanced division of the version space. This explains the slightly better performance of \AlgtwoWeak versus \AlgtwoRank.
 With respect to the computational complexity, \AlgRand costs the smallest amount of computational resources since it does not need to algorithmically decide which pair of items to query. \AlgtwoWeak is only second to \AlgRand.

\begin{table}[t]
	\centering
	\caption{Expected Query Complexity on Each Dataset} \label{tab:query}
	\begin{tabular}{llllllll}
\toprule[1.5pt]
		& \textsc{Rand}         & \Algone & \textsc{W-II-R}\footnote{\textsc{W-II-R} and \textsc{W-II-W} are the abbreviation of \AlgtwoRank and \AlgtwoWeak, respectively.} & \textsc{W-II-W}  & \AlgMinDist & \AlgGBS    & \AlgFastGBS \\ \midrule
		 \textit{MovieLens} & 11.46 & 594.97 & 10.86 & 10.79 & 49.45 & \textbf{9.01}  & 9.79 \\
		\textit{Wine} & 7.03  & 55.68 & 6.47  & \textbf{6.11} & 35.16 & 22.82 & 10.46 \\
		\textit{Music} & 14.31 & 648.12 & 12.71 & \textbf{12.67} & 143.61 & 36.90 & 27.78 \\
		\textit{Iris} & 7.86  & 164.42 & 7.56  & \textbf{6.98} & 19.85 & 9.81  & 9.54 \\
		 \bottomrule[1.25pt]
	\end{tabular}
	\end{table}

\begin{figure*}[h!]
	\centering
	\subfigure[\label{fig:computational_complexity}]{
		\includegraphics[width=0.31\linewidth]{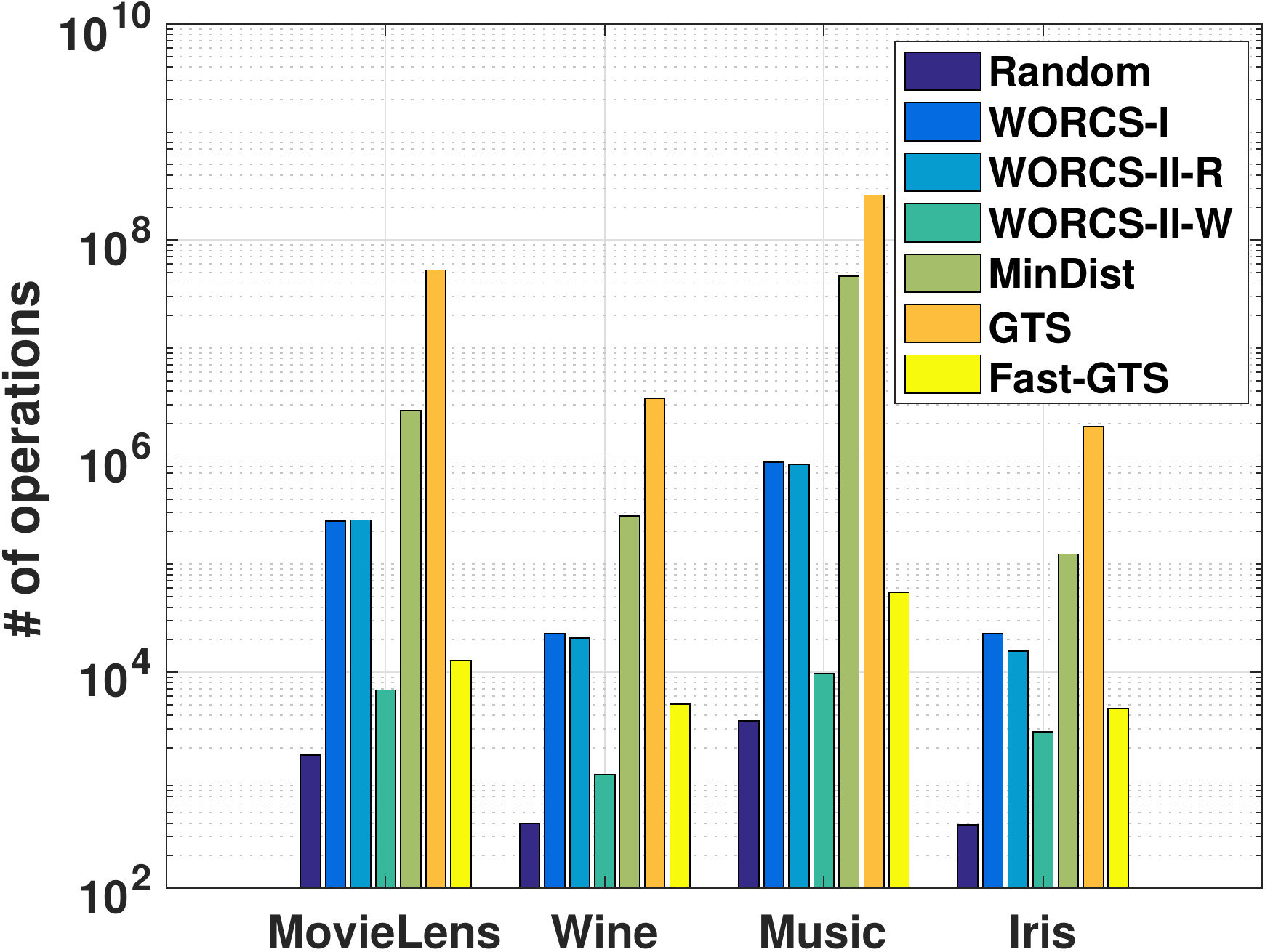}
	}
\subfigure[\label{fig:query_vs_N}]{
	\includegraphics[width=0.31\linewidth]{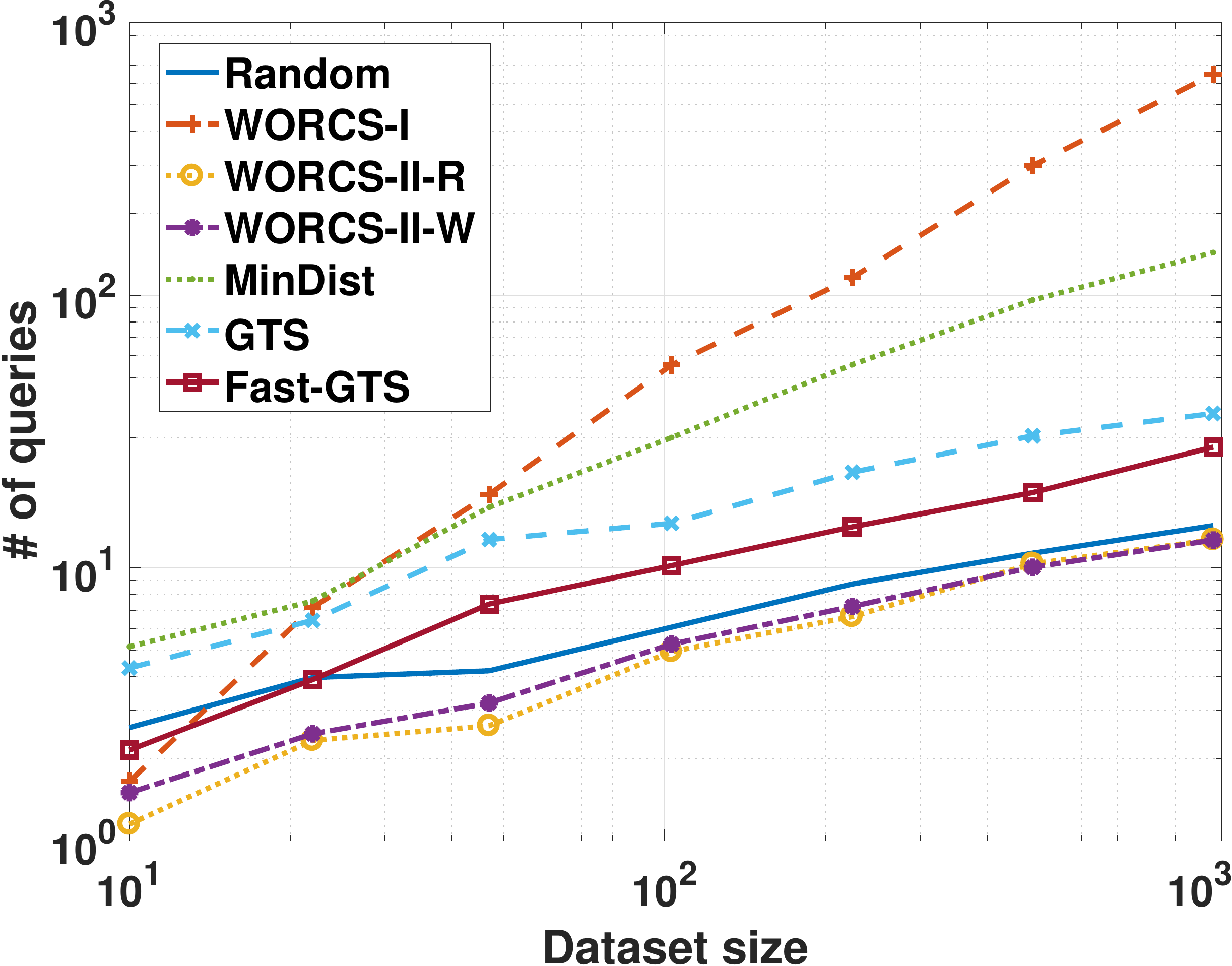}
}
\subfigure[\label{fig:comp_vs_N}]{
		\includegraphics[width=0.31\linewidth]{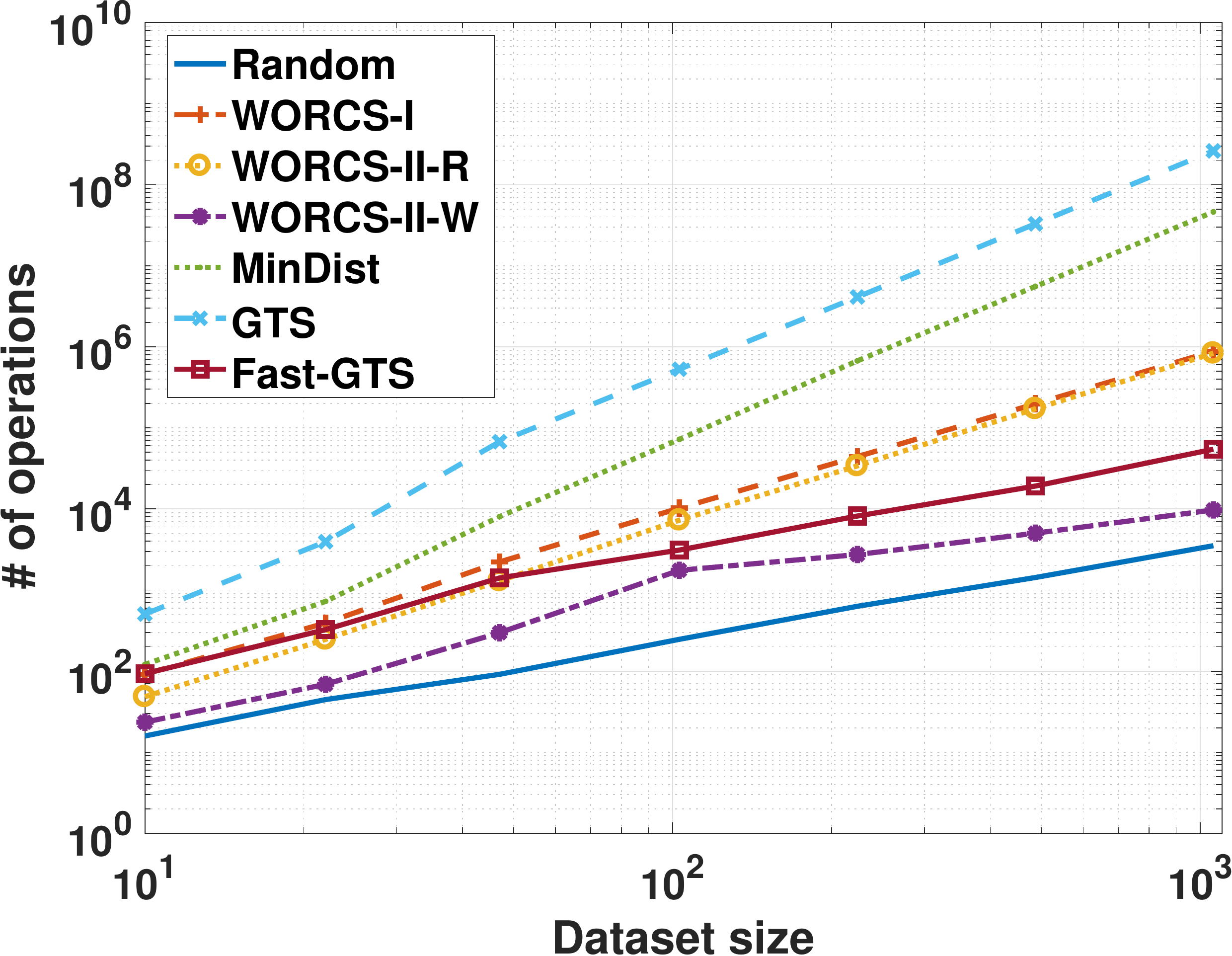}
}
\subfigure[\label{fig:query_vs_a}]{
	\includegraphics[width=0.31\linewidth]{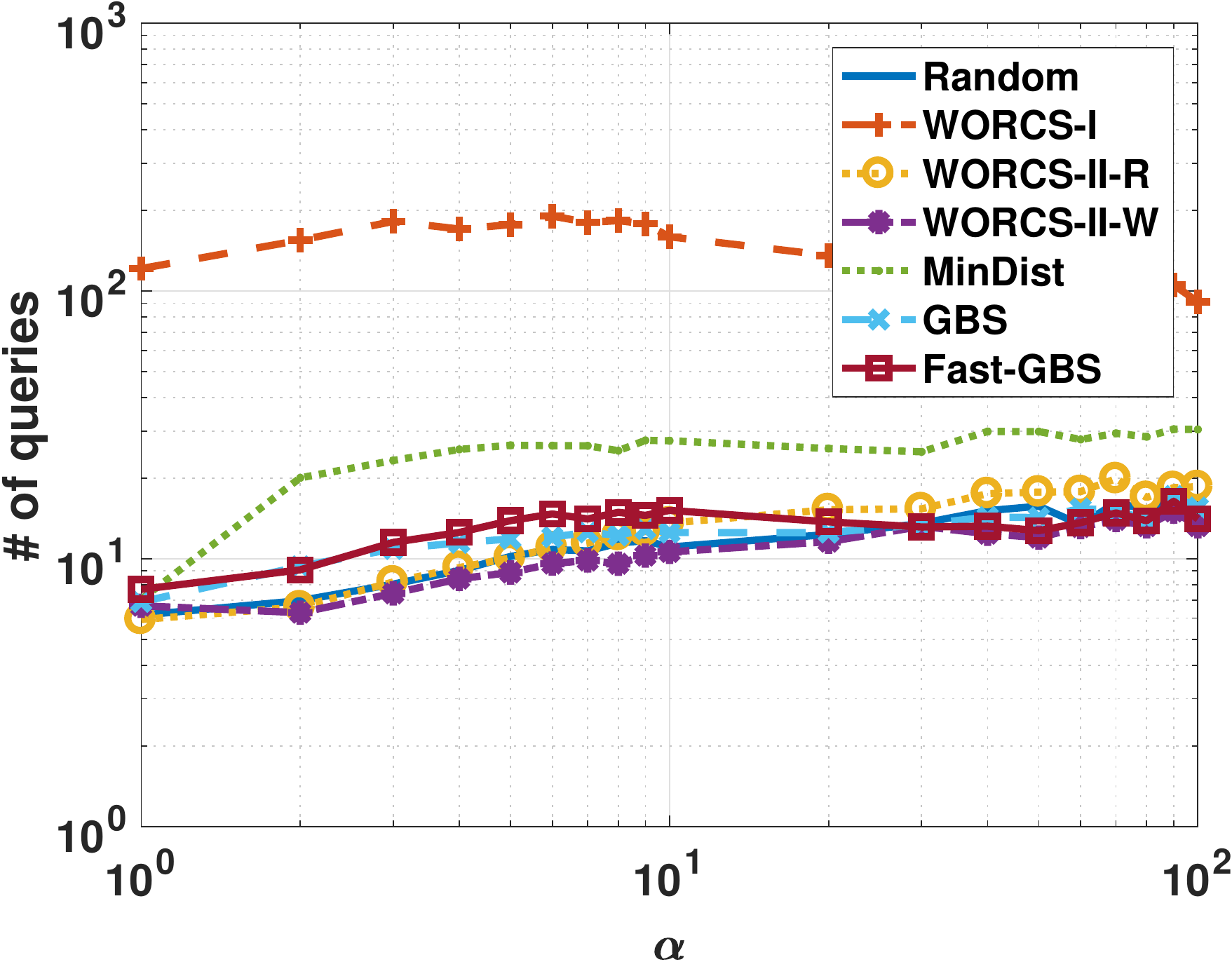}
}\subfigure[\label{fig:comp_vs_a}]{
\includegraphics[width=0.31\linewidth]{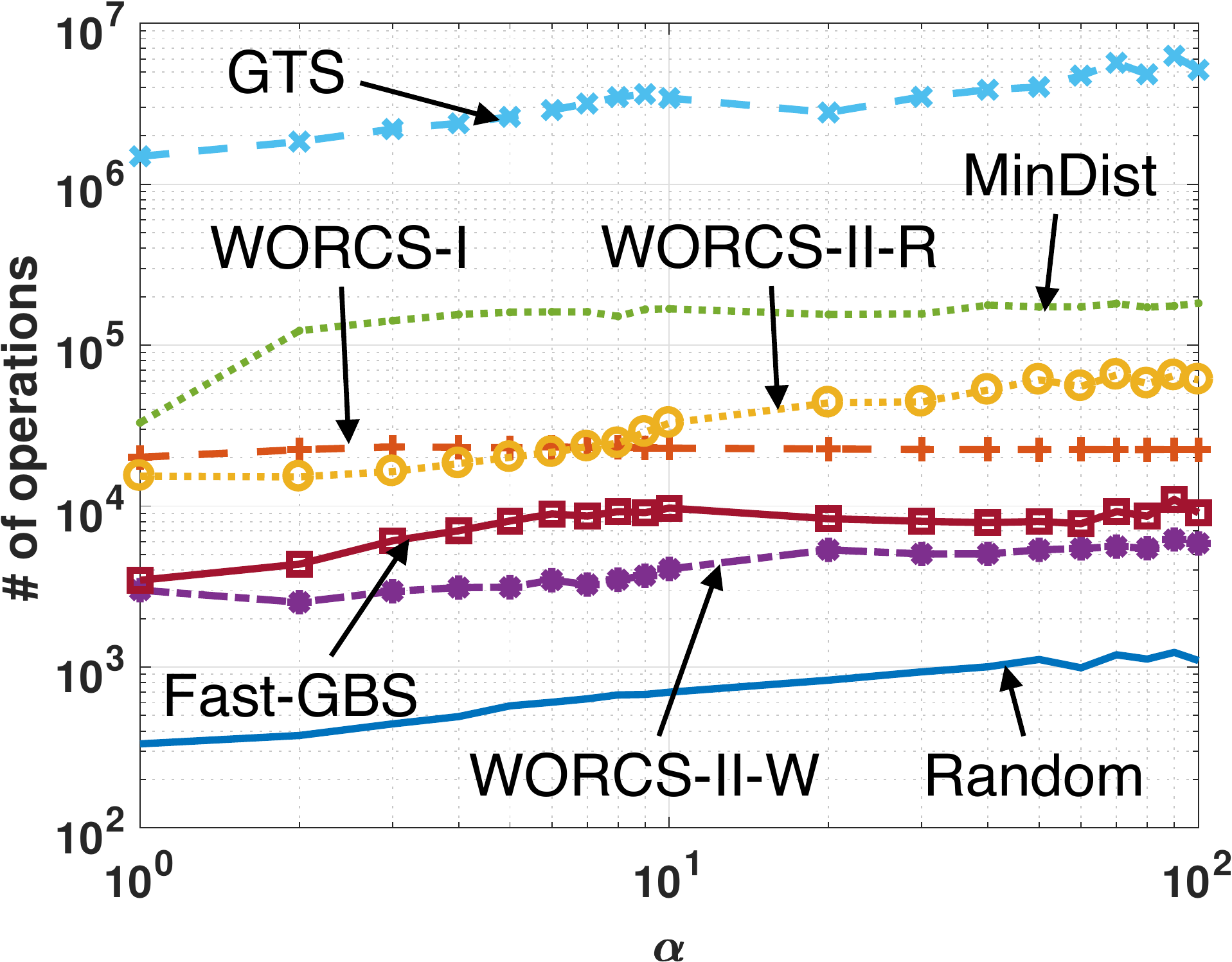}
}\subfigure[\label{fig:query_vs_expnt}]{
\includegraphics[width=0.31\linewidth]{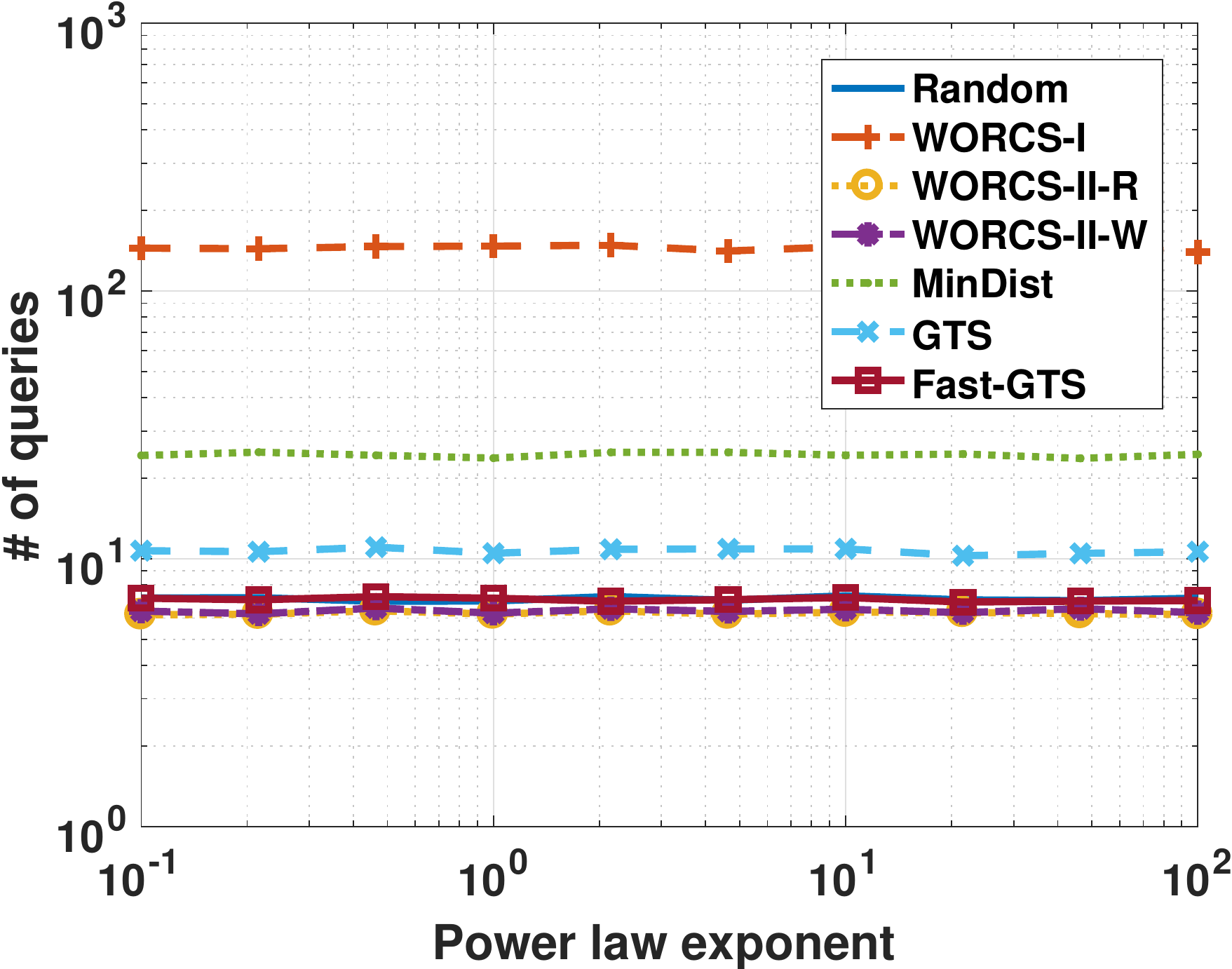}
}
\caption{\cref{fig:computational_complexity} shows the expected computational complexity, per search, of all seven algorithms applied to each dataset. \cref{fig:query_vs_a,fig:comp_vs_a} show the expected query and computational complexity as a function of the dataset size. We present how the expected query and computational complexity  vary with $ \alpha $ in \cref{fig:query_vs_a,fig:comp_vs_a}. In~\cref{fig:query_vs_expnt}, we present how the query complexity  scales as a function of the power law exponent of the demand distribution. }
\end{figure*}

\paragraph{Scalability}
In this set of experiments, in order to study the scalability of algorithms, we use the \emph{Music} dataset and sub-sampled
$ 10 $ to $ 1000 $ items from the dataset.  Query and computational complexity of different algorithms is shown in~\cref{fig:query_vs_N,fig:comp_vs_N}. 
We see the positive correlation between query/computational complexity and the dataset size. Since the curves are approximately linear in a log-log plot, the complexity of algorithms scales approximately polynomially in the dataset size. Specifically, the query complexity of \AlgtwoRank and \AlgtwoWeak scales approximately $ O(\sqrt{N}) $, where $ N $ is the dataset size.
 In~\cref{fig:query_vs_N}, the curves of \AlgtwoRank and \AlgtwoWeak are the two lowest curves, which represent lowest query complexity. 

\paragraph{Impact of $\boldsymbol{\alpha}$}
The performance of algorithms for different values of $\alpha$  is another important aspect to study.
We used the \emph{Iris} dataset and varied $ \alpha $ from $ 1 $ to $ 100 $. Query and computational complexity of different algorithms is shown in~\cref{fig:query_vs_a,fig:comp_vs_a}. A general upward tendency in computational complexity as $ \alpha $ increases is observed in~\cref{fig:comp_vs_a}. With respect to the query complexity shown in~\cref{fig:query_vs_a}, \Algone exhibits a unimodal curve. 
In fact, while a large $ \alpha $ leads to a greater number of balls in the $ \frac{\Delta}{8(\alpha+1)} $-cover (increases the query complexity in line~\ref{line:cover} of~\cref{alg:full_knowledge}), it results in smaller balls and therefore a smaller version space in the next iteration (reduces the query complexity as in line~\ref{line:next_version_space} of~\cref{alg:full_knowledge}). This explains the unimodal behavior of the curve of \Algone. Other algorithms exhibit a general positive correlation with $ \alpha $, as a larger $ \alpha $ tends to induce a more unbalanced division of the version space.  
\paragraph{Impact of Demand Distribution}
Finally, we explore impact of demand distribution.
We used the \emph{Iris} dataset and varied the exponent of the power law distribution from $ 10^{-1} $ to $ 10^2 $. Query complexity of different algorithms is shown in~\cref{fig:query_vs_expnt}. We observe that algorithms are generally robust to the change of the parameter of the power law demand distribution.

\section{Conclusion} \label{sec:conclusion}
\vspace{-10pt}
In this paper, we studied the problem of interactive learning though pairwise comparisons.
We introduced a new weak oracle model to handle noisy situations, under which the oracle answers only when it is sure of the answer. 
We also considered the problem of comparison-based search via our weak oracle.
We proposed two algorithms where they require different levels of knowledge about the distances or rankings between objects.
We guaranteed the performance of these algorithms based on a measure of intrinsic dimensionality for the distribution of points in the underling metric space. Finally, we compared our algorithms with several baseline algorithms on real datasets.
Note that we assumed the oracle is non-malicious and when it is not confident, it answers with ``?''. Although this model is robust to high levels of uncertainty, but considering the effect of erroneous answers from the weak oracle is interesting for future work.

\paragraph{Acknowledgement} Amin Karbasi is supported by DARPA Young Faculty Award (D16AP00046) and AFOSR Young Investigator Award (FA9550-18-1-0160).
Ehsan Kazemi is supported by Swiss National Science Foundation (Early Postdoc.Mobility) under grant
number 168574. 

\bibliographystyle{plainnat}
\bibliography{./tex/reference-list}

\begin{thebibliography}{49}
\providecommand{\natexlab}[1]{#1}
\providecommand{\url}[1]{\texttt{#1}}
\expandafter\ifx\csname urlstyle\endcsname\relax
  \providecommand{\doi}[1]{doi: #1}\else
  \providecommand{\doi}{doi: \begingroup \urlstyle{rm}\Url}\fi

\bibitem[Balcan et~al.(2009)Balcan, Beygelzimer, and
  Langford]{balcan2009agnostic}
Maria{-}Florina Balcan, Alina Beygelzimer, and John Langford.
\newblock Agnostic active learning.
\newblock \emph{J. Comput. Syst. Sci.}, 75\penalty0 (1):\penalty0 78--89, 2009.

\bibitem[Balcan et~al.(2016)Balcan, Vitercik, and White]{balcan2016learning}
Maria{-}Florina Balcan, Ellen Vitercik, and Colin White.
\newblock Learning combinatorial functions from pairwise comparisons.
\newblock In \emph{Proceedings of the 29th Conference on Learning Theory,
  {COLT} 2016, New York, USA, June 23-26, 2016}, pages 310--335, 2016.

\bibitem[Chen et~al.(2013)Chen, Bennett, Collins-Thompson, and
  Horvitz]{chen2013pairwise}
Xi~Chen, Paul~N Bennett, Kevyn Collins-Thompson, and Eric Horvitz.
\newblock Pairwise ranking aggregation in a crowdsourced setting.
\newblock In \emph{Proceedings of the sixth ACM international conference on Web
  search and data mining}, pages 193--202. ACM, 2013.

\bibitem[Clarkson(2006)]{clarkson2006nearest}
Kenneth~L Clarkson.
\newblock Nearest-neighbor searching and metric space dimensions.
\newblock \emph{Nearest-neighbor methods for learning and vision: theory and
  practice}, pages 15--59, 2006.

\bibitem[Cover and Thomas(2012)]{cover2012elements}
Thomas~M Cover and Joy~A Thomas.
\newblock \emph{Elements of information theory}.
\newblock John Wiley \& Sons, 2012.

\bibitem[Dalvi et~al.(2013)Dalvi, Dasgupta, Kumar, and
  Rastogi]{dalvi2013aggregating}
Nilesh Dalvi, Anirban Dasgupta, Ravi Kumar, and Vibhor Rastogi.
\newblock Aggregating crowdsourced binary ratings.
\newblock In \emph{Proceedings of the 22nd international conference on World
  Wide Web}, pages 285--294. ACM, 2013.

\bibitem[Dasgupta(2004)]{dasgupta2004analysis}
Sanjoy Dasgupta.
\newblock Analysis of a greedy active learning strategy.
\newblock In \emph{NIPS}, volume~17, pages 337--344, 2004.

\bibitem[Dasgupta(2005)]{dasgupta2005coarse}
Sanjoy Dasgupta.
\newblock Coarse sample complexity bounds for active learning.
\newblock In \emph{Advances in Neural Information Processing Systems, December
  5-8, 2005, Vancouver, British Columbia, Canada]}, pages 235--242, 2005.

\bibitem[Eriksson(2013)]{eriksson2013learning}
Brian Eriksson.
\newblock Learning to top-k search using pairwise comparisons.
\newblock In \emph{Proceedings of the Sixteenth International Conference on
  Artificial Intelligence and Statistics, {AISTATS} 2013, Scottsdale, AZ, USA,
  April 29 - May 1, 2013}, pages 265--273, 2013.

\bibitem[Firmani et~al.(2016)Firmani, Saha, and Srivastava]{firmani2016online}
Donatella Firmani, Barna Saha, and Divesh Srivastava.
\newblock Online entity resolution using an oracle.
\newblock \emph{Proceedings of the VLDB Endowment}, 9\penalty0 (5):\penalty0
  384--395, 2016.

\bibitem[Forina et~al.(1991)]{forina1991extendible}
M~Forina et~al.
\newblock An extendible package for data exploration, classification and
  correlation.
\newblock \emph{Institute of Pharmaceutical and Food Analisys and Technologies,
  Via Brigata Salerno}, 16147, 1991.

\bibitem[Golovin and Krause(2011)]{golovin2011adaptive}
Daniel Golovin and Andreas Krause.
\newblock Adaptive submodularity: Theory and applications in active learning
  and stochastic optimization.
\newblock \emph{J. Artif. Intell. Res. {(JAIR)}}, 42:\penalty0 427--486, 2011.

\bibitem[Goyal et~al.(2008)Goyal, Lifshits, and Sch\"{u}tze]{goyal2008disorder}
Navin Goyal, Yury Lifshits, and Hinrich Sch\"{u}tze.
\newblock Disorder inequality: A combinatorial approach to nearest neighbor
  search.
\newblock In \emph{Proceedings of the 2008 International Conference on Web
  Search and Data Mining}, WSDM '08, pages 25--32, New York, NY, USA, 2008.
  ACM.

\bibitem[Gupta et~al.(2003)Gupta, Krauthgamer, and Lee]{gupta2003bounded}
Anupam Gupta, Robert Krauthgamer, and James~R Lee.
\newblock Bounded geometries, fractals, and low-distortion embeddings.
\newblock In \emph{Foundations of Computer Science, 2003. Proceedings. 44th
  Annual IEEE Symposium on}, pages 534--543. IEEE, 2003.

\bibitem[Haghiri et~al.(2017)Haghiri, Ghoshdastidar, and von
  Luxburg]{haghiri2017comparison}
Siavash Haghiri, Debarghya Ghoshdastidar, and Ulrike von Luxburg.
\newblock Comparison-based nearest neighbor search.
\newblock In \emph{Artificial Intelligence and Statistics}, 2017.

\bibitem[Har-Peled and Kumar(2013)]{har2013approximate}
Sariel Har-Peled and Nirman Kumar.
\newblock Approximate nearest neighbor search for low-dimensional queries.
\newblock \emph{SIAM Journal on Computing}, 42\penalty0 (1):\penalty0 138--159,
  2013.

\bibitem[Harper and Konstan(2016)]{harper2016movielens}
F~Maxwell Harper and Joseph~A Konstan.
\newblock The movielens datasets: History and context.
\newblock \emph{ACM Transactions on Interactive Intelligent Systems (TiiS)},
  5\penalty0 (4):\penalty0 19, 2016.

\bibitem[Hastie and Tibshirani(1997)]{hastie1997classification}
Trevor Hastie and Robert Tibshirani.
\newblock Classification by pairwise coupling.
\newblock In \emph{Advances in Neural Information Processing Systems 10,
  {[NIPS} Conference, Denver, Colorado, USA, 1997]}, pages 507--513, 1997.

\bibitem[Heckel et~al.(2016)Heckel, Shah, Ramchandran, and
  Wainwright]{heckel2016active}
Reinhard Heckel, Nihar~B Shah, Kannan Ramchandran, and Martin~J Wainwright.
\newblock Active ranking from pairwise comparisons and the futility of
  parametric assumptions.
\newblock \emph{arXiv preprint arXiv:1606.08842}, 2016.

\bibitem[Hildrum et~al.(2004)Hildrum, Kubiatowicz, Rao, and
  Zhao]{hildrum2004distributed}
Kirsten Hildrum, John~D Kubiatowicz, Satish Rao, and Ben~Y Zhao.
\newblock Distributed object location in a dynamic network.
\newblock \emph{Theory of Computing Systems}, 37\penalty0 (3):\penalty0
  405--440, 2004.

\bibitem[Huang et~al.(2010)Huang, Jin, and Zhou]{huang2010active}
Sheng-Jun Huang, Rong Jin, and Zhi-Hua Zhou.
\newblock Active learning by querying informative and representative examples.
\newblock In \emph{Advances in neural information processing systems}, pages
  892--900, 2010.

\bibitem[Indyk and Motwani(1998)]{indyk1998approximate}
Piotr Indyk and Rajeev Motwani.
\newblock Approximate nearest neighbors: towards removing the curse of
  dimensionality.
\newblock In \emph{Proceedings of the thirtieth annual ACM symposium on Theory
  of computing}, pages 604--613. ACM, 1998.

\bibitem[Ipeirotis et~al.(2010)Ipeirotis, Provost, and
  Wang]{ipeirotis2010quality}
Panagiotis~G Ipeirotis, Foster Provost, and Jing Wang.
\newblock Quality management on amazon mechanical turk.
\newblock In \emph{Proceedings of the ACM SIGKDD workshop on human
  computation}, pages 64--67. ACM, 2010.

\bibitem[Karbasi et~al.(2011)Karbasi, Ioannidis, and
  Massouli{\'{e}}]{karbasi2011content}
Amin Karbasi, Stratis Ioannidis, and Laurent Massouli{\'{e}}.
\newblock Content search through comparisons.
\newblock In \emph{Automata, Languages and Programming - 38th International
  Colloquium, {ICALP} 2011, Zurich, Switzerland, July 4-8, 2011, Proceedings,
  Part {II}}, pages 601--612, 2011.

\bibitem[Karbasi et~al.(2012{\natexlab{a}})Karbasi, Ioannidis, and
  Massouli{\'{e}}]{karbasi2012comparison}
Amin Karbasi, Stratis Ioannidis, and Laurent Massouli{\'{e}}.
\newblock {Comparison-Based Learning with Rank Nets}.
\newblock In \emph{Proceedings of the 29th International Conference on Machine
  Learning, {ICML} 2012, Edinburgh, Scotland, UK, June 26 - July 1, 2012},
  2012{\natexlab{a}}.

\bibitem[Karbasi et~al.(2012{\natexlab{b}})Karbasi, Ioannidis, and
  Massouli{\'e}]{karbasi2012hot}
Amin Karbasi, Stratis Ioannidis, and Laurent Massouli{\'e}.
\newblock Hot or not: Interactive content search using comparisons.
\newblock In \emph{Information Theory and Applications Workshop (ITA), 2012},
  pages 291--297. IEEE, 2012{\natexlab{b}}.

\bibitem[Karger and Ruhl(2002)]{karger2002finding}
David~R Karger and Matthias Ruhl.
\newblock Finding nearest neighbors in growth-restricted metrics.
\newblock In \emph{Proceedings of the thiry-fourth annual ACM symposium on
  Theory of computing}, pages 741--750. ACM, 2002.

\bibitem[Karp and Kleinberg(2007)]{karp2007noisy}
Richard~M Karp and Robert Kleinberg.
\newblock Noisy binary search and its applications.
\newblock In \emph{Proceedings of the eighteenth annual ACM-SIAM symposium on
  Discrete algorithms}, pages 881--890. Society for Industrial and Applied
  Mathematics, 2007.

\bibitem[Krauthgamer and Lee(2004)]{krauthgamer2004navigating}
Robert Krauthgamer and James~R Lee.
\newblock {Navigating nets: simple algorithms for proximity search}.
\newblock In \emph{Proceedings of the fifteenth annual ACM-SIAM symposium on
  Discrete algorithms}, pages 798--807. Society for Industrial and Applied
  Mathematics, 2004.

\bibitem[Lifshits and Zhang(2009)]{lifshits2009combinatorial}
Yury Lifshits and Shengyu Zhang.
\newblock Combinatorial algorithms for nearest neighbors, near-duplicates and
  small-world design.
\newblock In \emph{Proceedings of the twentieth Annual ACM-SIAM Symposium on
  Discrete Algorithms}, pages 318--326. Society for Industrial and Applied
  Mathematics, 2009.

\bibitem[Maystre and Grossglauser(2015)]{maystre2015fast}
Lucas Maystre and Matthias Grossglauser.
\newblock Fast and accurate inference of plackett--luce models.
\newblock In \emph{Advances in Neural Information Processing Systems}, pages
  172--180, 2015.

\bibitem[Maystre and Grossglauser(2017)]{maystre2017just}
Lucas Maystre and Matthias Grossglauser.
\newblock {Just Sort It! {A} Simple and Effective Approach to Active Preference
  Learning}.
\newblock In \emph{Proceedings of the 34th International Conference on Machine
  Learning, {ICML} 2017, Sydney, NSW, Australia, 6-11 August 2017}, pages
  2344--2353, 2017.

\bibitem[Nowak(2009)]{nowak2009noisy}
Robert~D. Nowak.
\newblock Noisy generalized binary search.
\newblock In \emph{Advances in Neural Information Processing Systems}, pages
  1366--1374, 2009.

\bibitem[Qian et~al.(2015)Qian, Gao, and Jagadish]{qian2015learning}
Li~Qian, Jinyang Gao, and HV~Jagadish.
\newblock Learning user preferences by adaptive pairwise comparison.
\newblock \emph{Proceedings of the VLDB Endowment}, 8\penalty0 (11):\penalty0
  1322--1333, 2015.

\bibitem[Ram and Gray(2013)]{ram2013which}
P.~Ram and A.~G. Gray.
\newblock Which space partitioning tree to use for search?
\newblock In \emph{Proceedings of the 26th International Conference on Neural
  Information Processing Systems}, NIPS'13, pages 656--664, USA, 2013.

\bibitem[Salganik and Levy(2015)]{salganik2015wiki}
Matthew~J Salganik and Karen~EC Levy.
\newblock Wiki surveys: Open and quantifiable social data collection.
\newblock \emph{PloS one}, 10\penalty0 (5):\penalty0 e0123483, 2015.

\bibitem[Settles(2010)]{settles2010active}
Burr Settles.
\newblock Active learning literature survey.
\newblock \emph{University of Wisconsin, Madison}, 52\penalty0
  (55-66):\penalty0 11, 2010.

\bibitem[Stewart et~al.(2005)Stewart, Brown, and Chater]{stewart2005absolute}
Neil Stewart, Gordon~DA Brown, and Nick Chater.
\newblock Absolute identification by relative judgment.
\newblock \emph{Psychological review}, 112\penalty0 (4):\penalty0 881, 2005.

\bibitem[Tamuz et~al.(2011)Tamuz, Liu, Belongie, Shamir, and
  Kalai]{tamuz2015adaptively}
Omer Tamuz, Ce~Liu, Serge~J. Belongie, Ohad Shamir, and Adam Kalai.
\newblock Adaptively learning the crowd kernel.
\newblock In \emph{Proceedings of the 28th International Conference on Machine
  Learning, {ICML} 2011, Bellevue, Washington, USA, June 28 - July 2, 2011},
  pages 673--680, 2011.

\bibitem[Thurstone(1927)]{thurstone1927method}
Louis~L Thurstone.
\newblock The method of paired comparisons for social values.
\newblock \emph{The Journal of Abnormal and Social Psychology}, 21\penalty0
  (4):\penalty0 384, 1927.

\bibitem[Tschopp et~al.(2011)Tschopp, Diggavi, Delgosha, and
  Mohajer]{tschopp2011randomized}
Dominique Tschopp, Suhas Diggavi, Payam Delgosha, and Soheil Mohajer.
\newblock Randomized algorithms for comparison-based search.
\newblock In \emph{Advances in Neural Information Processing Systems}, pages
  2231--2239, 2011.

\bibitem[Tschopp et~al.(2015)Tschopp, Diggavi, and
  Grossglauser]{tschopp2015hierarchical}
Dominique Tschopp, Suhas Diggavi, and Matthias Grossglauser.
\newblock Hierarchical routing over dynamic wireless networks.
\newblock \emph{Random Structures \& Algorithms}, 47\penalty0 (4):\penalty0
  669--709, 2015.

\bibitem[Wah et~al.(2014)Wah, Van~Horn, Branson, Maji, Perona, and
  Belongie]{wah2014similarity}
Catherine Wah, Grant Van~Horn, Steve Branson, Subhransu Maji, Pietro Perona,
  and Serge Belongie.
\newblock Similarity comparisons for interactive fine-grained categorization.
\newblock In \emph{Proceedings of the IEEE Conference on Computer Vision and
  Pattern Recognition}, pages 859--866, 2014.

\bibitem[Wang et~al.(2014)Wang, Srebro, and Evans]{wang2014active}
Jialei Wang, Nathan Srebro, and James Evans.
\newblock Active collaborative permutation learning.
\newblock In \emph{Proceedings of the 20th ACM SIGKDD international conference
  on Knowledge discovery and data mining}, pages 502--511. ACM, 2014.

\bibitem[Wang et~al.(2012)Wang, Kraska, Franklin, and Feng]{wang2012crowder}
Jiannan Wang, Tim Kraska, Michael~J Franklin, and Jianhua Feng.
\newblock Crowder: Crowdsourcing entity resolution.
\newblock \emph{Proceedings of the VLDB Endowment}, 5\penalty0 (11):\penalty0
  1483--1494, 2012.

\bibitem[Wang et~al.(2013)Wang, Li, Kraska, Franklin, and
  Feng]{wang2013leveraging}
Jiannan Wang, Guoliang Li, Tim Kraska, Michael~J Franklin, and Jianhua Feng.
\newblock Leveraging transitive relations for crowdsourced joins.
\newblock In \emph{Proceedings of the 2013 ACM SIGMOD International Conference
  on Management of Data}, pages 229--240. ACM, 2013.

\bibitem[Wauthier et~al.(2013)Wauthier, Jordan, and
  Jojic]{wauthier2013efficient}
Fabian~L Wauthier, Michael~I Jordan, and Nebojsa Jojic.
\newblock Efficient ranking from pairwise comparisons.
\newblock \emph{ICML (3)}, 28:\penalty0 109--117, 2013.

\bibitem[Yan et~al.(2011)Yan, Fung, Rosales, and Dy]{yan2011active}
Yan Yan, Glenn~M Fung, R{\'o}mer Rosales, and Jennifer~G Dy.
\newblock Active learning from crowds.
\newblock In \emph{Proceedings of the 28th international conference on machine
  learning (ICML-11)}, pages 1161--1168, 2011.

\bibitem[Zhou et~al.(2014)Zhou, Claire, and King]{zhou2014predicting}
Fang Zhou, Q~Claire, and Ross~D King.
\newblock Predicting the geographical origin of music.
\newblock In \emph{Data Mining (ICDM), 2014 IEEE International Conference on},
  pages 1115--1120. IEEE, 2014.

\end{thebibliography}

\end{document}